\theoremstyle{thmstyleone}%
\newtheorem{theorem}{Theorem}%  meant for continuous numbers
\theoremstyle{thmstyletwo}%
\theoremstyle{thmstylethree}%
\newtheorem{lemma}{Lemma}
\newtheorem{corollary}{Corollary}
\begin{document}

\title{Chance constrained conic-segmentation support vector machine with uncertain data}

%%=============================================================%%
%% Prefix	-> \pfx{Dr}
%% GivenName	-> \fnm{Joergen W.}
%% Particle	-> \spfx{van der} -> surname prefix
%% FamilyName	-> \sur{Ploeg}
%% Suffix	-> \sfx{IV}
%% NatureName	-> \tanm{Poet Laureate} -> Title after name
%% Degrees	-> \dgr{MSc, PhD}
%% \author*[1,2]{\pfx{Dr} \fnm{Joergen W.} \spfx{van der} \sur{Ploeg} \sfx{IV} \tanm{Poet Laureate}
%%                 \dgr{MSc, PhD}}\email{iauthor@gmail.com}
%%=============================================================%%

\author*[1,2]{\fnm{Shen} \sur{Peng}}\email{pengshen@xidian.edu.cn}

\author[2]{\fnm{Gianpiero} \sur{Canessa}}\email{canessa@kth.se}

\author[1]{\fnm{Zhihua} \sur{Allen-Zhao}}\email{allenzhaozh@gmail.com}
%\equalcont{These authors contributed equally to this work.}

%\author*[1,2]{\fnm{First} \sur{Author}}\email{iauthor@gmail.com}
%
%\author[2,3]{\fnm{Second} \sur{Author}}\email{iiauthor@gmail.com}
%\equalcont{These authors contributed equally to this work.}
%
%\author[1,2]{\fnm{Third} \sur{Author}}\email{iiiauthor@gmail.com}
%\equalcont{These authors contributed equally to this work.}
\affil*[1]{\orgdiv{School of Mathematics and Statistics}, \orgname{Xidian University}, \orgaddress{\state{Xi’an}, \postcode{710126}, \country{China}}}
\affil[2]{\orgdiv{Department of Mathematics}, \orgname{KTH Royal Institute of Technology}, \orgaddress{\street{Lindstedtsv\"agen 25}, \postcode{SE-100 44}, \state{Stockholm}, \country{Sweden}}}

%\affil[2]{\orgdiv{Department}, \orgname{Organization}, \orgaddress{\street{Street}, \city{City}, \postcode{10587}, \state{State}, \country{Country}}}
%
%\affil[3]{\orgdiv{Department}, \orgname{Organization}, \orgaddress{\street{Street}, \city{City}, \postcode{610101}, \state{State}, \country{Country}}}

%%==================================%%
%% sample for unstructured abstract %%
%%==================================%%

\abstract{Support vector machines (SVM) is one of the well known supervised machine learning model.
The standard SVM models are dealing with the situation where the exact values of the data points are known.
This paper studies the SVM model when the data set contains uncertain or mislabelled data points.
To ensure the small probability of misclassification for the uncertain data, a chance constrained conic-segmentation SVM model is proposed for multiclass classification.
Based on the data set, a mixed integer programming formulation for the chance constrained conic-segmentation SVM is derived.
Kernelization of chance constrained conic-segmentation SVM model is also exploited for nonlinear classification.
The geometric interpretation is presented to show how the chance constrained conic-segmentation SVM works on uncertain data.
Finally, experimental results are presented to demonstrate the effectiveness of the chance constrained conic-segmentation SVM for both artificial and real-world data.}

\keywords{Support vector machines, Conic-segmentation, Chance constraint, Kernelization}

%%\pacs[JEL Classification]{D8, H51}

%%\pacs[MSC Classification]{35A01, 65L10, 65L12, 65L20, 65L70}

\maketitle

\section{Introduction}

In classification problems, a classifier is a function that mimics the relationship between the data vectors and their class labels.
Support vector machine (SVM) is a popular classifier, which was proposed by Cortes and Vapnik \cite{Vapnik1995} as a maximum margin classifier.
The success of the SVM has encouraged further research into extensions to the more general multiclass cases, which has been an active topic of research interest \cite{Mu2009,Angulo2006multi,Tian2012recent}.
Shilton et al.\cite{Shilton2012} proposed the conic-segmentation support vector machine (CS-SVM) by introducing the concept of target space into the
problem formulation and showed that some other multiclassfication model are special cases of this framework.

The standard CS-SVM is dealing with the situation where the exact values of the data points are known.
When the data points are uncertain or mislabelled, different robust models have been proposed to formulate the SVM with uncertainties \cite{Xanthopoulos2012book,Xanthopoulos2014robust,Fan2014}.
Xanthopoulos et al. \cite{Xanthopoulos2012book} proposed a robust optimization model where the
perturbation of the uncertain data is bounded by norm.
Xanthopoulos et al. \cite{Xanthopoulos2014robust} considered a robust
generalized eigenvalue classifier with ellipsoidal uncertainty of the data set.
Fan et al. \cite{Fan2014} derived
a robust model for the polyhedral uncertainties.
Stochastic programming is a natural approach to deal with uncertainty.
Models with chance constraints are used to ensure the small probability of misclassification for the uncertain data \cite{Shivaswamy2006second,Ben2011chance,Wang2018robust}.
%{\color{blue}
To deal with the chance constraint,
Shivaswamy et al. \cite{Shivaswamy2006second} and Ben-
Tal et al. \cite{Ben2011chance} transformed the chance constraint by Chebyshev inequality and Bernstein bounding schemes, respectively.
While, Wang et al. \cite{Wang2018robust} proposed the distributionally robust chance constraints for each uncertain data point with a moments based ambiguity set.
%}

Chance constrained optimization problem was first introduced and studied by Charnes et al. \cite{Charnes1959} and Miller and Wagner \cite{Miller1965}.
Since then, chance constrained optimization has been studied extensively in the stochastic programming literature.
However, this problem is difficult to solve in general.
One main reason is that the probability in the constraint generally has no closed form, often is non-convex and is typically difficult to compute.

When the samples of the random variable are available, the sample average approximation (SAA) approach can be applied to solve the chance constrained optimization problem approximately.
Such an approximation is obtained by replacing the actual distribution in chance constraint by an empirical distribution corresponding to a sample.
SAA methods for chance constrained optimization problems have been investigated in \cite{Luedtke2008,Pagnoncelli2009,Luedtke2010}.

To handle the uncertainty and mislabeling in the data set, we propose a stochastic approach to CS-SVM by introducing chance constraints into the problem formulation.
Based on the samples, the chance constrained CS-SVM model is derived as a mixed integer programming problem using SAA.
For nonlinear classification, the kernelization of the chance constrained CS-SVM model is proposed.
We illustrate the sample based reformulation for the chance constrained CS-SVM geometrically.
And an experiment is proposed to illustrate the performance of the chance constrained CS-SVM model.

%{\color{blue}
Different from the existing work about the chance constrained support vector machine, our work bases on conic-segmentation support vector machine for multiclassification, which can encompasses some well-known multiclass
methods \cite{Shilton2012}.
In addition, the proposed chance constrained CS-SVM is solved by SAA method based on the data set, but not probability inequalities, which can be conservative in practice.
Compared with the work in \cite{Wang2018robust}, which constructed the chance constraint for each data point, the chance constraint in this work is modeled for each class.
%}

The paper is organized as follows.
We present the background and modelling of chance constrained CS-SVM in section 2.
In section 3, we derive the sample based reformulation and kernelization of chance constrained CS-SVM and illustrate it geometrically.
In section 4, we give some experimental results to demonstrate the performance of CS-SVM with chance constraints.
Section 5 offers all our conclusions of the research.

\section{Chance constrained CS-SVM}

In this section, we consider a conic-segmentation support vector machine with chance constraints for multiclass classification problem.
\subsection{Multiclass classification problem}
The multiclass ($n$-class) classification problem can be defined as follows:
given a data set $\Theta = \{(\bm{x}_i, y_i): i \in \mathbb{Z}_N\}$ (i.e. the training set),
where $\bm{x}_i \in \mathbb{X} \subseteq \mathbb{R}^d $ is the $i$th input vector and $y_i \in \mathbb{Z}_n$ is its corresponding class, find a decision function $h : \mathbb{X} \rightarrow \mathbb{Z}_n$ (i.e. the trained classifier) that captures the pairwise relationships of the $N$ training pairs.
Here, $\mathbb{X}$ is the input space, $\mathbb{Z}_N=\{1,\cdots,N\}$ and $\mathbb{Z}_n=\{1,\cdots,n\}$.
Meanwhile, as all the data points $\bm{x}_i$ with $ y_i =s, s\in \mathbb{Z}_n $, belong to the same class, we can denote the sample set $\Theta_s = \{\bm{x}_i: y_i=s, i \in \mathbb{Z}_N \}$ with sample size $N_s$ for $s \in \mathbb{Z}_n$.

%CS-SVM is first proposed by Shilton et al. \shortcite{Shilton2012}.
Inspired by Shilton et al.\cite{Shilton2012}, we consider the following classifiers:
\begin{equation}\label{eq_classifier}
  h(\bm{x}) =  \sigma\left(\bm{g}(\bm{x})\right),
\bm{g}(\bm{x})=\sum_{m=1}^{d}x_m\bm{w}_m + \bm{b}=W\bm{x}+\bm{b},
\end{equation}
where $\bm{x} = (x_1,\cdots,x_d)^\top$, $W = \left( \bm{w}_1, \cdots, \bm{w}_d \right) \in \mathbb{R}^{d_T \times d}$,
$\bm{g}:\mathbb{R}^{d} \rightarrow \mathbb{R}^{d_T}$ is a training machine,
$\sigma:\mathbb{R}^{d_T} \rightarrow \mathbb{Z}_n$ is a classing function,
$ \mathbb{R}^{d_T} $ is the target space,
the weight vectors $\bm{w}_m \in \mathbb{R}^{d_T}, m=1,\cdots,d$ and bias vector $\bm{b} \in \mathbb{R}^{d_T}$ are selected during training.

To retain the useful properties of the SVM formulation, the CS-SVM defines the classing function $\sigma$ based on generalized inequalities.
Each class $s \in \mathbb{Z}_n$ is associated with a proper conic class region $H_s \subset \mathbb{R}^{d_T}$,% with center $\bm{u}_{n,s}$,
such that the set of all class regions forms an almost-everywhere non-intersecting target-space covering.
%The classing function labels points in target space on the basis of which class region they fall into.
Then, the classing function $\sigma$ is defined as
\[
\sigma(\bm{a}) = s, \text{if~} \bm{a} \in \mathrm{int}(H_s). %\text{if~} \bm{a} - \bm{u}_{n,s} \in \mathrm{int}(H_s).
\]
As shown by Shilton et al. \cite{Shilton2012}, for the separable case, the trained classifier $\bm{g}$ can be found by solving the following problem:
\begin{equation}
\label{CS_original}
\begin{aligned}
\min\limits_{\bm{w}_m,\bm{b}}& ~\frac{1}{2}\sum_{m=1}^{d}\|\bm{w}_m\|^2 \\
\mathrm{s.t.}~&~ \bm{g}(\bm{x}_i)-\bm{u}_{n,s} \in H_s, \bm{x}_i \in \Theta_s, i \in \mathbb{Z}_{N_s}, s \in \mathbb{Z}_n,
\end{aligned}
\end{equation}
where the class centres $\bm{u}_{n,s}, s \in \mathbb{Z}_n$, are defined a-priori such that $\bm{u}_{n,s} \in \mathrm{int}\left(H_s\cap H_s^*\right)$ for all $s \in \mathbb{Z}_n$, $\Theta_s$.
%, and $\bm{\xi}_s$ is a random feature vector with a determined distribution.
Here $H_s^*$ is the dual cone of $H_s$ defined by
\[
H_s^* = \{ \bm{c} \in \mathbb{R}^{d_T}: \bm{c}^{\top} \bm{a} \geq 0, \forall \bm{a} \in H_s \}.
\]

This approach provides a very flexible framework for multiclass classification.
Following Shilton et al. \cite{Shilton2012}, for $ s\in \mathbb{Z}_n$, the class region $H_s$ can be defined in a recursive division form.
In this scheme, the class centers $\bm{u}_{n,s}, s\in \mathbb{Z}_n$ are defined as vertices of a regular $(n-1)$-simplex in $d_T=n-1$ dimensional target space.
The class regions are defined by the class centers using the max-projection principle:
\[
H_s = \left\{ \bm{a}: \bm{u}_{n,s}^\top\bm{a} \geq \bm{u}_{n,t}^\top\bm{a}, \forall t \ne s  \right\},
\]
where the class centres are defined recursively by
\[
\begin{array}{l}
  \bm{u}_{n,1}=\left[
\begin{array}{c}
  \bm{0} \\
  -1
\end{array}
\right] \in \mathbb{R}^{d_T},\\
\bm{u}_{n,s+1}=\frac{1}{n-1}\left[
\begin{array}{c}
  \sqrt{n(n-2)}\bm{u}_{n-1,s} \\
  1
\end{array}
\right]\in \mathbb{R}^{d_T}, s \in \mathbb{Z}_n,
\end{array}
\]
with $\bm{u}_{2,1}=[-1], \bm{u}_{2,2}=[1]$, $d_T=n-1$.

Given the training set $\Theta$, the CS-SVM can be expressed as
\begin{equation}
\label{CC_SVM_CS}
\begin{aligned}
\min\limits_{\bm{w}_m,\bm{b}}& ~\frac{1}{2}\sum_{m=1}^{d}\|\bm{w}_m\|^2 \\
\mathrm{s.t.}~&~ (\bm{v}^n_{s,t})^\top(\bm{g}(\bm{x}_i)-\bm{u}_{n,s}) \geq 0, \\
& ~~~~~~~~~~~~~~~~~~ t\ne s, t \in \mathbb{Z}_n, \bm{x}_i \in \Theta_s, i \in \mathbb{Z}_{N_s}, s \in \mathbb{Z}_n.
\end{aligned}
\end{equation}
From \cite{Shilton2012}, the CS-SVM with soft margin can be formulated as
\begin{equation}
\label{CC_SVM_CS-Soft}
\begin{aligned}
\min\limits_{\bm{w}_m,\bm{b}}& ~\frac{1}{2}\sum_{m=1}^{d}\|\bm{w}_m\|^2 + \frac{C}{N}\sum_{\stackrel{i \in \mathbb{Z}_{N_s}}{s \in \mathbb{Z}_n}}\bm{u}_{n,s}^\top\bm{\zeta}_{i,s} \\
\mathrm{s.t.}~& ~(\bm{v}^n_{s,t})^\top(\bm{g}(\bm{x}_i)+ \bm{\zeta}_{i,s}-\bm{u}_{n,s}) \geq 0,\bm{x}_i \in \Theta_s,\\
& ~ (\bm{v}^n_{s,t})^\top\bm{\zeta}_{i,s}\geq 0,t\ne s, t \in \mathbb{Z}_n, i \in \mathbb{Z}_{N_s},\\
&~ s \in \mathbb{Z}_n.
\end{aligned}
\end{equation}
Here, the second term in the objective function provides a measure of the training error on the training set, as enabled by the inclusion of the slack variables $\bm{\zeta}_{i,s}, i \in \mathbb{Z}_{N_s}, s \in \mathbb{Z}_n$.
The constant $C \in \mathbb{R}_+$ controls the trade-off between margin maximization and training error minimization.
Readers are referred to \cite{Shilton2012} for more details.

\subsection{Chance constrained formulation}

When uncertain or mislabelled data exists in the data set, the model needs to be modified to contain the uncertainty information and deal with the uncertain situation.
%Meanwhile, as all the data points $\bm{x}_i$ with $ y_i =s, s\in \mathbb{Z}_n $, belong to the same class, we can denote the sample set $\Theta_s = \{\bm{x}_i: y_i=s, i \in \mathbb{Z}_N \}$ for $s \in \mathbb{Z}_n$.
%the points in this class by $\bm{\xi}_s$ and
%And,
For each $s \in \mathbb{Z}_n $, the data set $\Theta_s$ can be viewed as the samples set generated following the distribution of random feature vector $\bm{\xi}_s$.
In other words, for $ s \in \mathbb{Z}_n$, $\bm{\xi}_s$ is a random vector defined on some probability space $(\Omega,\mathscr{F},\mathbb{P})$ with support set $\Xi\subseteq \mathbb{R}^d$.
Therefore, the following chance constraint can be introduced to ensure the small probability of misclassification for the uncertain data:
\begin{equation}\label{cc_class}
  \mathbb{P}_{F_s}\{\bm{g}(\bm{\xi}_s)-\bm{u}_{n,s} \in \mathrm{int}(H_s)\}  \geq 1-\alpha_s, s \in \mathbb{Z}_n,
\end{equation}
where, for $s \in \mathbb{Z}_n$, $F_s$ is the distribution of $\bm{\xi}_s$, $\alpha_s\in (0,1)$ is a confidence parameter and close to zero.

According to the definition of the subset $H_s, s \in \mathbb{Z}_n$, we can naturally assume that for any $ \bm{g} $, the set $\{ \bm{\xi}_s: \bm{g}(\bm{\xi}_s)) \in \mathrm{bd}(H_s)\}$ is $\mathbb{P}$-measure zero, i.e., $\bm{g}(\bm{\xi}_s)) \notin \mathrm{bd}(H_s)$ w.p.1.
Therefore, the chance constraint \eqref{cc_class} can be reformulated as
\[
\mathbb{P}_{F_s}\{\bm{g}(\bm{\xi}_s)-\bm{u}_{n,s} \in H_s\} \geq 1-\alpha_s, s \in \mathbb{Z}_n.
\]

Then, the chance constrained CS-SVM can be modeled as
\begin{equation}
\label{CC_SVM}
\begin{aligned}
\min\limits_{\bm{w}_m,\bm{b}}& ~\frac{1}{2}\sum_{m=1}^{d}\|\bm{w}_m\|^2 \\
\mathrm{s.t.}~&~ \mathbb{P}_{F_s}\{\bm{g}(\bm{\xi}_s)-\bm{u}_{n,s} \in H_s\} \geq 1-\alpha_s, s \in \mathbb{Z}_n.
\end{aligned}
\end{equation}
This model ensures an upper bound on the misclassification probability.

In some practical problems, the confidence parameters $\alpha_s \in (0,1), s\in \mathbb{Z}_n$, in the chance constrained model can be not easy to determine in advance.
In this case, the confidence parameters $\alpha_s, s\in \mathbb{Z}_n$, can be regarded as decision variables by adding a penalty term about $\alpha_s, s\in \mathbb{Z}_n$, in the objective function.

As a special case, for the binary classification, we have $n=2$ and $d_T=1$.
Then, the class region can be expressed as
\[
H_1 = \left\{ a: a \leq 0 \right\}, H_2 = \left\{ a: a \geq 0 \right\},
\]
with centers $\bm{u}_{2,1}=[-1], \bm{u}_{2,2}=[1]$.
Hence, the problem \eqref{CC_SVM} can be rewritten as
\begin{equation}
\label{CC_SVM_Binary}
\begin{array}{cl}
\min\limits_{\bm{w},b}& \frac{1}{2}\|\bm{w}\|^2 \\
\mathrm{s.t.}& \mathbb{P}_{F_1}\{\bm{\xi}_1^\top\bm{w} + b \leq -1\} \geq 1-\alpha_1, \\
&\mathbb{P}_{F_2}\{\bm{\xi}_2^\top\bm{w} + b \geq 1\} \geq 1-\alpha_2,
\end{array}
\end{equation}
where $\bm{w}=(w_1,\cdots,w_d)^\top$.

\section{Deterministic formulation of chance constrained CS-SVM}

In this section, we consider the multiclass classification based on CS-SVM, i.e., $d_T = n-1 \geq 2$.
By applying the structure of class region $H_s, s\in \mathbb{Z}_n$,
the problem \eqref{CC_SVM} can be rewritten as
\begin{equation}
\label{CC_SVM_Mul}
\begin{aligned}
\min\limits_{\bm{w}_m,\bm{b}}& ~\frac{1}{2}\sum_{m=1}^{d}\|\bm{w}_m\|^2 \\
\mathrm{s.t.}~& ~\mathbb{P}_{F_s}\{
(\bm{v}^n_{s,t})^\top(\bm{g}(\bm{\xi}_s)-\bm{u}_{n,s}) \geq 0, \\
& ~~~~~~~~~~~~~~~~~~ t\ne s, t \in \mathbb{Z}_n
\} \geq 1-\alpha_s, s \in \mathbb{Z}_n,
\end{aligned}
\end{equation}
where $\bm{v}^n_{s,t} = \bm{u}_{n,s}-\bm{u}_{n,t}$, $\bm{g}(\bm{\xi}_s) = \sum_{m=1}^{d}\xi_{s,m}\bm{w}_m + \bm{b}$.

Given the training set $\Theta = \{(\bm{x}_i, y_i): i \in \mathbb{Z}_N\}$, the data set $\Theta_s = \{\bm{x}_i: y_i=s, i \in \mathbb{Z}_N \}, s \in \mathbb{Z}_n,$ is a sample set of $\bm{\xi}_s, s \in \mathbb{Z}_n$.
Therefore, by applying sample average approximation (SAA), the model \eqref{CC_SVM_Mul} can be solved approximated by the following model:
\begin{equation}
\label{CC_SVM_SAA}
\begin{aligned}
\min\limits_{\bm{w}_m,\bm{b}}& ~\frac{1}{2}\sum_{m=1}^{d}\|\bm{w}_m\|^2 \\
\mathrm{s.t.}~& ~\frac{1}{N_s}\sum_{\stackrel{\bm{x}_i \in \Theta_s,}{i \in \mathbb{Z}_{N_s}}}\mathbf{1}_{C_s}(\bm{w}_m,\bm{x}_i) \geq 1-\alpha_s, s \in \mathbb{Z}_n,\\
\end{aligned}
\end{equation}
where
$C_s=\{(\bm{w}_m,\bm{x}_i) :(\bm{v}^n_{s,t})^\top(\bm{g}(\bm{x}_i)-\bm{u}_{n,s}) \geq 0, t\ne s, t \in \mathbb{Z}_n \}, s \in \mathbb{Z}_n$
and $ \mathbf{1}_{C_s}(\bm{w}_m,\bm{x}_i) = 1 $, if $(W,\bm{\xi}_s) \in C_s$, otherwise, $ \mathbf{1}_{C_s}(\bm{w}_m,\bm{x}_i) = 0 $.

Denote $\theta^*$ and $S$ as the optimal value and the optimal solution set of problem \eqref{CC_SVM_Mul}, respectively and $\theta_N$ and $S_N$ as the optimal value and the optimal solution set of problem \eqref{CC_SVM_SAA}, respectively.
Then, from the Theorem 1 in \cite{Chen2020}, we have $\theta_N \rightarrow \theta^*$ and $\mathbb{D}(S_N,S) \rightarrow 0$ w.p.1 as $N \rightarrow \infty$.
Here, $\mathbb{D}(A,B)=\sup\limits_{x \in A}\inf\limits_{y \in B}\|x-y\|$, for two sets $A, B \in \mathbb{R}^{d_T\cdot d}$.

Moreover, problem \eqref{CC_SVM_SAA} can be reformulated as a mixed integer programming problem
\begin{equation}
\label{CC_SVM_Integer}
\begin{aligned}
\min\limits_{\bm{w}_m,\bm{b}}& ~\frac{1}{2}\sum_{m=1}^{d}\|\bm{w}_m\|^2 \\
\mathrm{s.t.}~&~ (\bm{v}^n_{s,t})^\top(\bm{g}(\bm{x}_i)-\bm{u}_{n,s}) + M_s z^s_i \geq 0,\\
& ~~~~~~~~~~~~~~~~~~ t\ne s, t \in \mathbb{Z}_n, \bm{x}_i \in \Theta_s, i \in \mathbb{Z}_{N_s}, s \in \mathbb{Z}_n,\\
& ~\sum_{ i \in \mathbb{Z}_{N_s}} z^s_i \leq \alpha_sN_s, s \in \mathbb{Z}_n,\\
&~ z^s_i \in \{0,1\}, i \in \mathbb{Z}_{N_s}, s \in \mathbb{Z}_n,
\end{aligned}
\end{equation}
where $M_s$ is a sufficiently large positive number, such that $(\bm{v}^n_{s,t})^\top(\bm{g}(\bm{x}_i)-\bm{u}_{n,s}) + M_s \geq 0$ for all $\bm{w}_m, \bm{b}, \bm{x}_i \in \Theta_s, i \in \mathbb{Z}_{N_s}, s \in \mathbb{Z}_n$.

Comparing the models \eqref{CC_SVM_CS}, \eqref{CC_SVM_CS-Soft} and model \eqref{CC_SVM_Integer}, we can observe that under strict separation
conditions, model \eqref{CC_SVM_CS} and model \eqref{CC_SVM_CS-Soft} are equivalent.
And the solution of model \eqref{CC_SVM_CS} is also feasible for model \eqref{CC_SVM_Integer}.
In addition, by setting $\alpha_s = 0, s \in \mathbb{Z}_n$, model \eqref{CC_SVM_CS-Soft} reduces to model \eqref{CC_SVM_Integer}.
However, when there are uncertainty among the samples, model \eqref{CC_SVM_Integer} won't be affected as much as model \eqref{CC_SVM_CS} and model \eqref{CC_SVM_CS-Soft}, since only partial samples play a role during training in model \eqref{CC_SVM_Integer}.

Under non-strict separation conditions, model \eqref{CC_SVM_CS} becomes infeasible.
An optimal solution, denoted by $ (\bm\bar{{w}}_m,\bm{\bar{b}})$, can be obtained by solving model \eqref{CC_SVM_Integer}, while some points, which lead the non-separation, are ignored according to the proportion $\alpha_s$ of total samples.
Model \eqref{CC_SVM_CS-Soft} can provide a solution with slack variables $\bm{\zeta}_{i,s}, i \in \mathbb{Z}_{N_s}, s \in \mathbb{Z}_n$.
By observing the constraints in model \eqref{CC_SVM_Integer} and model \eqref{CC_SVM_CS-Soft},
it is not hard to see that there always exist some $ \bm{\bar{\zeta}}_{i,s}, i \in \mathbb{Z}_{N_s}, s \in \mathbb{Z}_n $, such that $ \left( \bm\bar{{w}}_m,\bm{\bar{b}},\bm{\bar{\zeta}}_{i,s}  \right) $ is feasible for model \eqref{CC_SVM_CS-Soft}.
However, model \eqref{CC_SVM_CS-Soft} focus more on the points, which cause the non-separation, compared with model \eqref{CC_SVM_Integer}, which just ignores these points.
Therefore the classifier provided by model \eqref{CC_SVM_CS-Soft} performs worse when these points are mislabelled.
While, the model \eqref{CC_SVM_Integer} is solved by ignoring these mislabelled data.
Therefore, the classifier obtained by solving model \eqref{CC_SVM_Integer} can perform more robustly against the mislabelled data.

\subsection{Kernelization}

As the training machine $\bm{g}$ is a linear vector-valued function on $\bm{x}$, the classifier in \eqref{eq_classifier} is modelled for linearly separable case.
For nonlinear classification, we can apply a feature map such that the nonlinearly separable case can be dealt with linear classifier.

Denote $\bm{\phi}: \mathbb{X} \rightarrow \mathbb{R}^{d_K}$ the feature map and $ \mathbb{R}^{d_K} $ the feature space.
The points in input space $\mathbb{X}$ are mapped to feature space $\mathbb{R}^{d_K}$ by the pre-defined nonlinear feature map $\bm{\phi}$, and $\phi_m:\mathbb{X} \rightarrow \mathbb{R}$ is the $m$-th component of map $\bm{\phi}$.
And in the feature space $\mathbb{R}^{d_K}$, the linear classifier can be applied.
Therefore, we can denote $\bm{\zeta}=\bm{\phi}(\bm{x})$ and re-define the training machine as
$\bm{g}:\mathbb{R}^{d_K} \rightarrow \mathbb{R}^{d_T}$ and
\[
\bm{g}(\bm{\zeta})=\sum_{m=1}^{d_K}\zeta_m\bm{w}_m + \bm{b} = W\bm{\zeta}+\bm{b},
\]
where $W=\left( \bm{w}_1, \cdots, \bm{w}_{d_K} \right) \in \mathbb{R}^{d_T \times d_K}$.
Correspondingly, the classifier can be expressed as $h(\bm{x}) =  \sigma\left(\bm{g}(\bm{\phi}(\bm{x}))\right)$.

A nonlinear decision boundary in $ \mathbb{R}^d  $ can be be obtained by solving the mixed integer programming problem \eqref{CC_SVM_Integer} in the higher-dimensional feature space $\mathbb{R}^{d_K}$:
\begin{equation}
\label{CC_SVM_Integer_K}
\begin{aligned}
\min\limits_{\bm{w}_m,\bm{b}}& ~\frac{1}{2}\sum_{m=1}^{d_K}\|\bm{w}_m\|^2 \\
\mathrm{s.t.}~&~ (\bm{v}^n_{s,t})^\top(\bm{g}(\bm{\phi}(\bm{x}_i))-\bm{u}_{n,s}) + M_s z^s_i \geq 0,\\
& ~~~~~~~~~~~~~~~~~~ t\ne s, t \in \mathbb{Z}_n, \bm{x}_i \in \Theta_s, i \in \mathbb{Z}_{N_s}, s \in \mathbb{Z}_n,\\
& ~\sum_{ i \in \mathbb{Z}_{N_s}} z^s_i \leq \alpha_sN_s, s \in \mathbb{Z}_n,\\
&~ z^s_i \in \{0,1\}, i \in \mathbb{Z}_{N_s}, s \in \mathbb{Z}_n,
\end{aligned}
\end{equation}

To carry out this problem, we need to reformulate the chance constrained problem in terms of a given kernel function $ K(\bm{x}_i, \bm{x}_j)=\bm{\phi}(\bm{x}_i)^\top\bm{\phi}(\bm{x}_j)$ satisfying Mercer's condition (\cite{shawe2004kernel}).

The kernel trick will only work if problem \eqref{CC_SVM_Integer_K} can be entirely expressed in terms of inner products of the mapped data $\bm{\phi}(\bm{x})$ only.
Fortunately, this is indeed the case as shown in the following lemma.

\begin{lemma}\label{lem_span}
Let $ \Theta_s, s \in \mathbb{Z}_n$, be the training sample set in the class corresponding to $ \bm{\xi}_s, s \in \mathbb{Z}_n $, respectively.
Denote each row of $W$ by $\omega_j^\top, j=1,\cdots, d_T$.
Then, each row $\omega_j$ of the optimal $ W $ will lie in the span of the data points in $ \Theta_s, s \in \mathbb{Z}_n$.
\end{lemma}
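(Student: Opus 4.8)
The plan is to prove Lemma~\ref{lem_span} by the standard orthogonal-decomposition (representer-theorem) argument, adapted to the binary variables appearing in \eqref{CC_SVM_Integer_K}. Two elementary facts drive the proof. First, the objective is $\frac{1}{2}\sum_{m=1}^{d_K}\|\bm{w}_m\|^2=\frac{1}{2}\|W\|_F^2=\frac{1}{2}\sum_{j=1}^{d_T}\|\omega_j\|^2$, i.e.\ the sum of the squared norms of the rows of $W$. Second, $W$ enters the constraints of \eqref{CC_SVM_Integer_K} only through the terms $(\bm{v}^n_{s,t})^\top\bm{g}(\bm{\phi}(\bm{x}_i))=(\bm{v}^n_{s,t})^\top\big(W\bm{\phi}(\bm{x}_i)+\bm{b}\big)$, and $(\bm{v}^n_{s,t})^\top W\bm{\phi}(\bm{x}_i)$ is a fixed linear combination of the scalars $\omega_j^\top\bm{\phi}(\bm{x}_i)$, $j=1,\dots,d_T$; hence $W$ enters only through the inner products $\omega_j^\top\bm{\phi}(\bm{x}_i)$ with $i\in\mathbb{Z}_N$.

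First I would set $V:=\mathrm{span}\{\bm{\phi}(\bm{x}_i):i\in\mathbb{Z}_N\}\subseteq\mathbb{R}^{d_K}$, the span of the mapped training points pooled over all classes $\Theta_s$, and decompose each row of a feasible $W$ as $\omega_j=\omega_j^{\parallel}+\omega_j^{\perp}$ with $\omega_j^{\parallel}\in V$ and $\omega_j^{\perp}\in V^{\perp}$. Because $\omega_j^{\perp}\perp\bm{\phi}(\bm{x}_i)$ for every $i$, we get $\omega_j^\top\bm{\phi}(\bm{x}_i)=(\omega_j^{\parallel})^\top\bm{\phi}(\bm{x}_i)$, so replacing $W$ by the matrix $\widetilde W$ whose rows are $(\omega_j^{\parallel})^\top$ leaves the left-hand side of every big-$M$ constraint unchanged, while the binary variables, the cardinality constraints $\sum_i z^s_i\le\alpha_s N_s$, and the bias $\bm{b}$ are untouched; thus $(\widetilde W,\bm{b},\{z^s_i\})$ is feasible whenever $(W,\bm{b},\{z^s_i\})$ is. By the Pythagorean identity, $\|\widetilde W\|_F^2=\sum_{j}\|\omega_j^{\parallel}\|^2\le\sum_{j}\big(\|\omega_j^{\parallel}\|^2+\|\omega_j^{\perp}\|^2\big)=\|W\|_F^2$, with equality if and only if $\omega_j^{\perp}=\bm{0}$ for all $j$. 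Applying this to an optimal solution $(W^{*},\bm{b}^{*},\{z^{s,*}_i\})$, the point $(\widetilde W^{*},\bm{b}^{*},\{z^{s,*}_i\})$ is feasible with objective value at most the optimum, so the two values must coincide; this forces $(\omega^{*}_j)^{\perp}=\bm{0}$, i.e.\ $\omega^{*}_j\in V$ for every $j$, which is exactly the claim.

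The only subtlety — and the "main obstacle", such as it is — is that \eqref{CC_SVM_Integer_K} is a mixed integer program, so one cannot invoke convexity of the whole problem; the remedy is precisely what was done above, namely to freeze the optimal binary vector $\{z^{s,*}_i\}$ and run the perturbation argument inside the resulting convex quadratic subproblem in $(W,\bm{b})$, where the map $W\mapsto\widetilde W$ preserves feasibility and does not increase the objective. Existence of the optimum is presupposed by the lemma and is itself routine: for each admissible integer assignment the objective $\frac12\|W\|_F^2$ is continuous and, viewed as a function of $W$ alone, coercive on the corresponding polyhedron, so a minimizer is attained, and there are only finitely many integer assignments. One may additionally note that strict convexity of $\frac12\|\cdot\|_F^2$ in $W$ makes the optimal $W$ unique for a fixed optimal $\{z^{s,*}_i\}$, and the argument shows that it equals its own orthogonal projection onto $V$.
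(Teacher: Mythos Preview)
Your proof is correct and follows essentially the same orthogonal-decomposition argument as the paper: decompose each row $\omega_j$ into its component in the span of the (mapped) training points and its orthogonal complement, observe that the complement does not affect the constraints but strictly increases the objective unless it vanishes, and conclude. Your version is in fact more careful than the paper's, as you explicitly address the mixed-integer structure by freezing the optimal binary assignment and running the perturbation inside the resulting convex subproblem, and you also remark on existence; the paper's proof omits these points but is otherwise identical in spirit.
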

\begin{proof}
For $j=1,\cdots,d_T$, we can write any $\omega_j$ as $\omega_j = \omega_j^p + \omega_j^o$, where $\omega_j^p$ is the projection of $\omega_j$ in the span of the samples (vector space spanned by all the sample points in $ \Theta_s, s \in \mathbb{Z}_n$),
whereas $ \omega_j^o $ is the orthogonal component to the samples.
Then, it can be easily checked that
\[
\begin{array}{l}
  \sum_{m=1}^{d_K}\|\bm{w}_m\|^2 = \sum_{j=1}^{d_T}\left(\|\omega_j^p\|^2 + \|\omega_j^o\|^2\right), \\
  (\bm{v}^n_{s,t})^\top(\bm{g}(\bm{x}_i)-\bm{u}_{n,s}) =
  \sum_{j=1}^{d_T}(\bm{v}^n_{s,t})_j\left(( \omega_j^p )^\top\bm{x}_i + \bm{b}_j - (\bm{u}_{n,s})_j \right),
\end{array}
\]
because, for $j=1,\cdots,d_T$, $ (\omega_j^p)^\top(\omega_j^o) =0 $ and $ (\omega_j^o)^\top\bm{x}_i = 0, j=1,\cdots,d_T, \forall \bm{x}_i \in \Theta_s, s \in \mathbb{Z}_n $.
Therefore, for $j=1,\cdots,d_T$, the orthogonal component $ \omega_j^o $ won't affect the constraints in problem \eqref{CC_SVM_Integer}.
Since the objective is to be minimized, we can get $\omega_j^o = 0, j=1,\cdots,d_T$, which means $ \omega_j = \omega_j^p, j=1,\cdots,d_T$.
This implies that the optimal $  \omega_j, j=1,\cdots,d_T $, will lie in the span of the data points in $ \Theta_s, s \in \mathbb{Z}_n$.
\end{proof}

As a consequence, for $j=1,\cdots, d_T$, we can write $\omega_j$ as a linear combination of the samples and then solve for the coefficients.
By doing so, one can easily check that the optimization problem \eqref{CC_SVM_Integer} can be expressed entirely in terms of inner products between samples in the sample sets $\Theta_s, s \in \mathbb{Z}_n$, only if the conditions of the lemma are fulfilled.
This will make the kernelization of our approach possible.

\begin{theorem}\label{TH_Kernel}
Let $K_{ij}=K(\bm{x}_i, \bm{x}_j)$, where $K$ is a given kernel function satisfying Mercer’s condition.
With the conditions in Lemma \ref{lem_span}, the optimal classifier in the feature space $\mathbb{R}^{d_K}$ for data set $\Theta$ can be expressed as
\begin{equation}\label{machine_K}
  \bm{g}(\bm{\phi}(\bm{x})) = f(\gamma^*) + \bm{b}^*,
\end{equation}
where $f:\mathbb{R}^{N} \rightarrow \mathbb{R}^{d_T}$ is a map, $f_j$ is the $j$-th component of map $f$ with $f_j(\gamma^*)=\sum_{s \in \mathbb{Z}_n}\sum_{\bm{x}_i \in \Theta_s}K(\bm{x},\bm{x}_i)(\gamma_i^j)^*$.
The optimal parameter $\gamma^*$ and $\bm{b}^*$ can be obtained by solving the following problem:
\begin{equation}
\label{CC_SVM_Kernel}
\begin{aligned}
\min\limits_{\gamma^j,\bm{b}}& ~\frac{1}{2}\sum_{j=1}^{d_T}(\gamma^j)^\top K \gamma^j \\
\mathrm{s.t.}~&~ \sum_{j=1}^{d_T}(\bm{v}^n_{s,t})_j\left(K_i^\top\gamma^j + \bm{b}_j - (\bm{u}_{n,s})_j \right) + M_s z^s_i \geq 0,\\
& ~~~~~~~~~~~~~~~~~~ t\ne s, t \in \mathbb{Z}_n, \bm{x}_i \in \Theta_s, i \in \mathbb{Z}_{N_s}, s \in \mathbb{Z}_n,\\
& ~\sum_{ i \in \mathbb{Z}_{N_s}} z^s_i \leq \alpha_sN_s, s \in \mathbb{Z}_n,\\
&~ z^s_i \in \{0,1\}, i \in \mathbb{Z}_{N_s}, s \in \mathbb{Z}_n,
\end{aligned}
\end{equation}
where $K$ is a matrix with elements $K_{ij} = \bm{\phi}(\bm{x}_i)^\top\bm{\phi}(\bm{x}_j), \bm{x}_i \in \Theta_s, \bm{x}_j \in \Theta_{s'}, s,s' \in \mathbb{Z}_n $,
and $ K_i $ is a vector with elements $ K_{ij} = \bm{\phi}(\bm{x}_i)^\top\bm{\phi}(\bm{x}_j), \bm{x}_j \in \Theta_{s}, s \in \mathbb{Z}_n $.
\end{theorem}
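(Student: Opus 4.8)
The plan is to use the representer-type statement of Lemma \ref{lem_span} to replace the weight matrix $W$ by a finite family of expansion coefficients, and then to verify that the objective and every constraint of \eqref{CC_SVM_Integer_K} depend on $W$ only through the inner products $\bm{\phi}(\bm{x}_i)^\top\bm{\phi}(\bm{x}_j)=K_{ij}$. Since the binary variables $z_i^s$ and the cardinality constraints $\sum_{i}z_i^s\le\alpha_s N_s$ do not involve $W$ at all, the whole argument reduces to substituting a kernel expansion into the quadratic objective and the linear inequalities.

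First, by Lemma \ref{lem_span}, at an optimal solution of \eqref{CC_SVM_Integer_K} each row $\omega_j$, $j=1,\dots,d_T$, of $W$ lies in the span of $\{\bm{\phi}(\bm{x}_i):\bm{x}_i\in\Theta_s,\ s\in\mathbb{Z}_n\}$, so I would write $\omega_j=\sum_{s\in\mathbb{Z}_n}\sum_{\bm{x}_i\in\Theta_s}\gamma_i^j\,\bm{\phi}(\bm{x}_i)$ and collect the coefficients into $\gamma^j\in\mathbb{R}^N$; conversely every choice of $\gamma^j$ yields a feasible $W$, so optimising over $\gamma^1,\dots,\gamma^{d_T}$ and $\bm{b}$ is equivalent to \eqref{CC_SVM_Integer_K}. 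Substituting the expansion into the objective gives $\sum_{m=1}^{d_K}\|\bm{w}_m\|^2=\sum_{j=1}^{d_T}\|\omega_j\|^2=\sum_{j=1}^{d_T}(\gamma^j)^\top K\gamma^j$, and Mercer's condition guarantees $K\succeq0$, so the reformulated objective is still convex. For the constraints, the $j$-th component of $\bm{g}(\bm{\phi}(\bm{x}_i))$ becomes $\omega_j^\top\bm{\phi}(\bm{x}_i)+\bm{b}_j=K_i^\top\gamma^j+\bm{b}_j$, whence $(\bm{v}^n_{s,t})^\top(\bm{g}(\bm{\phi}(\bm{x}_i))-\bm{u}_{n,s})=\sum_{j=1}^{d_T}(\bm{v}^n_{s,t})_j\bigl(K_i^\top\gamma^j+\bm{b}_j-(\bm{u}_{n,s})_j\bigr)$. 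Plugging these two identities into \eqref{CC_SVM_Integer_K} produces exactly \eqref{CC_SVM_Kernel}, so an optimal pair $(\gamma^*,\bm{b}^*)$ of \eqref{CC_SVM_Kernel} corresponds to an optimal $W^*$ of \eqref{CC_SVM_Integer_K} with the same optimal value.

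Finally, I would evaluate the trained machine at a new point $\bm{x}$: with $\omega_j^*=\sum_{s\in\mathbb{Z}_n}\sum_{\bm{x}_i\in\Theta_s}(\gamma_i^j)^*\bm{\phi}(\bm{x}_i)$, the $j$-th component of $\bm{g}(\bm{\phi}(\bm{x}))=W^*\bm{\phi}(\bm{x})+\bm{b}^*$ equals $\sum_{s\in\mathbb{Z}_n}\sum_{\bm{x}_i\in\Theta_s}(\gamma_i^j)^*\bm{\phi}(\bm{x}_i)^\top\bm{\phi}(\bm{x})+\bm{b}_j^*=\sum_{s\in\mathbb{Z}_n}\sum_{\bm{x}_i\in\Theta_s}K(\bm{x},\bm{x}_i)(\gamma_i^j)^*+\bm{b}_j^*=f_j(\gamma^*)+\bm{b}_j^*$, which is precisely \eqref{machine_K}.

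The argument is essentially bookkeeping once Lemma \ref{lem_span} is available; the points that need care are (i) justifying that restricting $W$ to the span of the mapped data loses nothing, so that \eqref{CC_SVM_Integer_K} and \eqref{CC_SVM_Kernel} share the same optimal value and the reconstructed $W^*$ is genuinely optimal (this is exactly where Lemma \ref{lem_span} is used), and (ii) observing that $\gamma^j\mapsto\omega_j$ need not be injective when the $\bm{\phi}(\bm{x}_i)$ are linearly dependent — this is harmless since only one valid representation is needed. I expect the only mild obstacle to be stating the feasible-set equivalence cleanly in the presence of the integrality constraints on $z_i^s$, which are untouched by the substitution and therefore pose no real difficulty.
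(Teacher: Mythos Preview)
Your proposal is correct and follows essentially the same route as the paper: invoke Lemma~\ref{lem_span} to write each row $\omega_j$ as a combination of the mapped training points, substitute to express the objective as $(\gamma^j)^\top K\gamma^j$ and each constraint via $K_i^\top\gamma^j$, and read off the classifier form \eqref{machine_K}. Your treatment is in fact slightly more careful than the paper's, explicitly noting the converse feasibility direction and the non-injectivity of $\gamma^j\mapsto\omega_j$, but the core argument is identical.
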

\begin{proof}
Please refer to the Appendix \ref{AppendixA}.
\end{proof}

For the binary classification, we can get the following corollary from Theorem \ref{TH_Kernel} directly.
\begin{corollary}
For the binary classification problem, i.e. $n=2$, $d_T=1 $, the problem \eqref{CC_SVM_Kernel} can be reduced to
\begin{equation}
\label{CC_SVM_KBinary}
\begin{aligned}
\min\limits_{\gamma,\bm{b}}& ~\frac{1}{2}\gamma^\top K \gamma \\
\mathrm{s.t.}~&~ K_i^\top\gamma + \bm{b} - M_1 z^1_i \leq -1, \bm{x}_i \in \Theta_1,\\
&~ K_j^\top\gamma + \bm{b} + M_2 z^2_j \geq 1, \bm{x}_j \in \Theta_2,\\
& ~~\sum_{ i \in \mathbb{Z}_{N_1}} z_i^1 \leq \alpha_1N_1, \sum_{ j \in \mathbb{Z}_{N_2}} z_j^2 \leq \alpha_2N_2,\\
&~ z_i^1,z_j^2 \in \{0,1\}, i \in \mathbb{Z}_{N_1}, j \in \mathbb{Z}_{N_2}.
\end{aligned}
\end{equation}
In addition, the optimal classifier for binary classification can be expressed as
\[
\bm{g}(\bm{\phi}(\bm{x})) = \sum_{i \in    \mathbb{Z}_{N_1}\cup \mathbb{Z}_{N_2}}K(\bm{x},\bm{x}^i)\gamma_i^* + \bm{b}^*,
\]
where $(\gamma^*, \bm{b}^*)$ is an optimal solution for problem \eqref{CC_SVM_KBinary}, and $ \gamma_i^* $ is the $i$th component of $\gamma^*$.
\end{corollary}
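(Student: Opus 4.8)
The plan is to obtain this corollary directly as the $n=2$, $d_T=1$ specialization of Theorem~\ref{TH_Kernel}, so that essentially all the work is bookkeeping in the one-dimensional target space. First I would record that for $n=2$ the class centres are the scalars $\bm{u}_{2,1}=[-1]$ and $\bm{u}_{2,2}=[1]$, so the only difference vectors occurring are $\bm{v}^2_{1,2}=\bm{u}_{2,1}-\bm{u}_{2,2}=-2$ and $\bm{v}^2_{2,1}=\bm{u}_{2,2}-\bm{u}_{2,1}=2$. Since $d_T=1$, the sum $\sum_{j=1}^{d_T}$ in problem~\eqref{CC_SVM_Kernel} collapses to a single term, and I would set $\gamma:=\gamma^1\in\mathbb{R}^N$ and $b:=\bm{b}_1\in\mathbb{R}$. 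Then the objective $\frac{1}{2}\sum_{j=1}^{d_T}(\gamma^j)^\top K\gamma^j$ becomes $\frac{1}{2}\gamma^\top K\gamma$, and the classifier \eqref{machine_K} becomes the scalar-valued map $\bm{g}(\bm{\phi}(\bm{x}))=\sum_{i\in\mathbb{Z}_{N_1}\cup\mathbb{Z}_{N_2}}K(\bm{x},\bm{x}^i)\gamma_i^*+\bm{b}^*$.

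Next I would rewrite the two families of misclassification constraints. For $s=1$ the only index $t\ne s$ in $\mathbb{Z}_2$ is $t=2$, so the constraint in \eqref{CC_SVM_Kernel} reads $-2\left(K_i^\top\gamma+b+1\right)+M_1 z^1_i\ge 0$ for $\bm{x}_i\in\Theta_1$, equivalently $K_i^\top\gamma+b+1\le\frac{M_1}{2}z^1_i$; for $s=2$ the only index is $t=1$, giving $2\left(K_j^\top\gamma+b-1\right)+M_2 z^2_j\ge 0$ for $\bm{x}_j\in\Theta_2$, equivalently $K_j^\top\gamma+b-1\ge-\frac{M_2}{2}z^2_j$. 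Absorbing the harmless positive factor $\frac{1}{2}$ into the large constants (renaming $\frac{M_1}{2}\mapsto M_1$ and $\frac{M_2}{2}\mapsto M_2$) yields precisely $K_i^\top\gamma+b-M_1 z^1_i\le -1$ and $K_j^\top\gamma+b+M_2 z^2_j\ge 1$, which are the constraints of \eqref{CC_SVM_KBinary}. The cardinality constraints $\sum_i z^s_i\le\alpha_s N_s$ and the integrality $z^s_i\in\{0,1\}$ carry over verbatim with $n=2$, producing the two displayed pairs.

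The only point that needs a word of care — and what I would flag as the main, though minor, obstacle — is checking that the sign manipulations and the rescaling of the bigM constants are legitimate: one must verify that the rescaled $M_1$ (resp.\ $M_2$) still satisfies its defining property, namely $K_i^\top\gamma+b+1\le M_1$ (resp.\ $K_j^\top\gamma+b-1\ge-M_2$) for all feasible $\gamma,b$ over the sample sets, which is immediate since the original $M_s$ was only required to be ``sufficiently large'' and is merely halved. Assembling these observations shows that problem~\eqref{CC_SVM_Kernel} with $n=2$, $d_T=1$ is exactly problem~\eqref{CC_SVM_KBinary}, and the displayed form of the optimal classifier follows from \eqref{machine_K}; this completes the proof.
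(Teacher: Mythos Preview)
Your proposal is correct and follows exactly the approach indicated in the paper, which simply states that the corollary is obtained ``directly'' from Theorem~\ref{TH_Kernel} without providing further details. Your careful bookkeeping of the $n=2$, $d_T=1$ specialization, including the handling of the class centres, the sign manipulations, and the harmless rescaling of the big-$M$ constants, fills in precisely the routine verification that the paper omits.
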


\subsection{Geometric interpretation}

In this subsection, we try to interpret how the sample based reformulation of chance constrained SVM works geometrically.

Without loss of generality, the binary classification can be taken as an example.
For binary classification, based on the data points, the problem \eqref{CC_SVM_Integer} can be reduced as
\begin{equation}
        \label{CC_SVM_Integer_B}
        \begin{aligned}
        \min\limits_{\bm{w},b,z_1,z_2}& ~\frac{1}{2}\|\bm{w}\|^2 \\
        \mathrm{s.t.}~~~&~ (\bm{x}^i_1)^\top\bm{w} + b - z^i_1M_1\leq -1, \forall \bm{x}^i_1 \in \Theta_1,\\
        &~(\bm{x}^j_2)^\top\bm{w} + b + z^i_2M_2 \geq 1, \forall \bm{x}^j_2 \in \Theta_2,\\
        & ~\sum_{ i \in \mathbb{Z}_{N_1}} z^i_1 \leq \alpha_1N_1, \sum_{ j \in \mathbb{Z}_{N_2}} z^j_2 \leq \alpha_2N_2,\\
        &~ z^i_1,z^j_2 \in \{0,1\}, i \in \mathbb{Z}_{N_1}, j \in \mathbb{Z}_{N_2}.
        \end{aligned}
\end{equation}
And the CS-SVM model \eqref{CC_SVM_CS-Soft} can be expressed as
\begin{equation}
        \label{CC_SVM_CS-Soft_B}
        \begin{aligned}
        \min\limits_{\bm{w},b,\eta}& ~\frac{1}{2}\|\bm{w}\|^2 + \frac{C}{N}\left(\sum_{i \in \mathbb{Z}_{N_1}}\eta_i + \sum_{j \in \mathbb{Z}_{N_2}}\eta_j\right)\\
        \mathrm{s.t.}~&~ (\bm{x}^i_1)^\top\bm{w} + b \leq -1 + \eta_i, \forall \bm{x}^i_1 \in \Theta_1,\\
        &~(\bm{x}^j_2)^\top\bm{w} + b \geq 1 - \eta_j, \forall \bm{x}^j_2 \in \Theta_2,\\
        &~\eta_i \geq 0, i \in \mathbb{Z}_{N_1},\eta_j \geq 0, j \in \mathbb{Z}_{N_2}.
        \end{aligned}
\end{equation}

From model \eqref{CC_SVM_Integer_B} and model \eqref{CC_SVM_CS-Soft_B}, we can observe that in model \eqref{CC_SVM_Integer_B}, the nonseparable points are removed during the learning process.
This lead the margin between two classes larger.
However, in model \eqref{CC_SVM_CS-Soft_B}, nonseparable points are considered with penalization in the objective function.
This means that the nonseparable points will still affect the learning process. If the nonseparability of these points comes from noise or mislabel, the classifier from model \eqref{CC_SVM_CS-Soft_B} will be influenced more than the classifier obtained by solving model \eqref{CC_SVM_Integer_B}.
This implies that the classifier provided by the chance constrained model can be more robust with uncertain data points.

The geometric interpretation of sample based reformulation is shown in Figure \ref{Fig_dataSAA}.
The black dashed line is the true classifier, which is a hyperplane.
The blue squares belong to class $H_1$, and the red triangles belong to class $H_2$.
In addition, we assume that there exists uncertainty in the data set, which leads misclassification and some outliers.
The mislabelled points and outliers are remarked with circles.

\begin{figure}[htbp]
\centering
\includegraphics[width=0.9\textwidth]{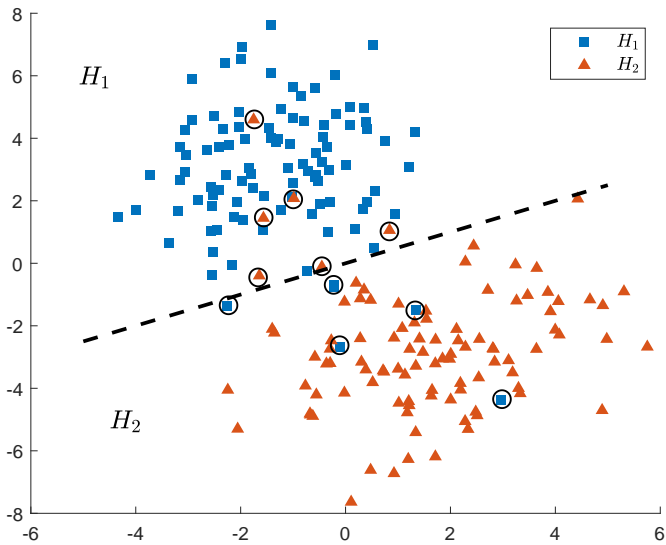}
\caption{Geometric interpretation of chance constrained CS-SVM}
\label{Fig_dataSAA}
\end{figure}

To guarantee the feasibility of the corresponding optimization problem, the mislabelled or the uncertainty points, which are far away from the boundary, will be ignored, such as the three circled blue square in the center of red triangles and the three circled red triangles in the center of blue squares in Figure \ref{Fig_dataSAA}.
Therefore, the effect of these mislabelled points will be reduced.
Since the objective is to maximize the margin of separation, some of the data points, which are close to the boundary between two regions, will not be taken into consideration either, for example, the circled blue squares and circled red triangles close to the black dashed line in Figure \ref{Fig_dataSAA}.
Ignoring these points can reduce the influence of the uncertainty or the misclassification in the data set as possible as we can.
When the mislabelled data or the uncertainty is close to the boundary, as this kind of data will be ignored during training, the mislabelled data or the uncertainty won't effect our classifier much, either.
And the total proportion of ignored data is smaller or equal to $\alpha_s, s \in \mathbb{Z}_n$.

\section{Experimental methodology and results}

In this section, the performance of chance constrained conic-segmentation support vector machine on artificial and real data is considered, especially in the situation that there are uncertainty or mislabelled points in the data set.
In the experiments, we study the performance of chance constrained support vector machine by comparing with standard conic-segmentation support vector machine and other classification model for uncertain data,
%compare with the standard conic-segmentation support vector machine,
in both binary classification and multiclass classification.

All the optimization problems in the experiments were solved by the CVX package \cite{cvx} with Matlab R2020a, on a Laptop with an Intel Core i7-8550U CPU and 16.0 GB RAM.
To solve the mixed integer programming problem, we use the command \texttt{cvx\_solver gurobi} in CVX to invoke the commercial solver Gurobi and set the time limit as 7200 seconds.

In these experiments, we have used the linear kernel, polynomial kernel and radial basis function (RBF) kernel functions:
\[
\begin{array}{l}
  K_{linear}(x,y) = x^\top y, \\
  K_{poly}(x,y) = (1+x^\top y)^d, \\
  K_{RBF} = \exp\left(-\gamma\|x-y\|^2\right),
\end{array}
\]
where the kernel parameters are $d \in \{ 2,4,6 \}$ and $10^{-2} \leq \gamma \leq 10^2$, respectively.
%To compare the performances, in each experiment, the parameters of kernel function remain the same for both standard conic-segmentation support vector machine and chance constrained model.
%The trade-off parameter $C$ in standard model was always selected as $C=1$.
%{\color{blue}
The trade-off parameter $C$ in standard conic-segmentation support vector machine was selected from $ 10^{-2} \leq \frac{C}{N} \leq 10^2 $.
Both $C$ and the kernel parameter $\gamma$ were carried out using a grid search to minimise leave-one-out error measured on the training set.
%}

%{\color{blue}
As the confidence parameters $\alpha_s \in (0,1), s\in \mathbb{Z}_n$, in the chance constrained model are not easy to predetermine properly for practical data set, in this experiment $\alpha_s, s\in \mathbb{Z}_n$, are viewed as decision variables by adding a penalty term $\sum_{s \in \mathbb{Z}_n}\alpha_s$ in the objective function.
%}

\subsection{Artificial data}

This experiment investigates the difference between the performances of the chance constrained CS-SVM and the CS-SVM with soft margin both in binary classification and multiclass classification, especially in the condition that some data points are mislabelled.

For the binary classification, a $2$-dimensional data set was designed for easy visualization.
The data set was generated in $ 4 $ different ways.
The first data set was generated from the distributions:
\[
\mathbf{P}_1 = \mathcal{N}\left(\left[
\begin{array}{c}
  1.5 \\
  -1.5
\end{array}
\right],
\left[
\begin{array}{cc}
  2 & 0.5\\
  0.5 & 3
\end{array}
\right]
\right),
\quad \mathbf{P}_2 = \mathcal{N}\left(\left[
\begin{array}{c}
  -1.5 \\
  1.5
\end{array}
\right],
\left[
\begin{array}{cc}
  2 & 0.5\\
  0.5 & 3
\end{array}
\right]
\right),
\]
where $\mathcal{N}$ is the normal distribution.
The rest three data sets were all generated randomly from the distributions:
\[
\mathbf{P}_1 = \mathcal{N}\left(\left[
\begin{array}{c}
  0 \\
  -2.5
\end{array}
\right],
\left[
\begin{array}{cc}
  2 & 0.5\\
  0.5 & 3
\end{array}
\right]
\right),
\quad \mathbf{P}_2 = \mathcal{N}\left(\left[
\begin{array}{c}
  0 \\
  2.5
\end{array}
\right],
\left[
\begin{array}{cc}
  2 & 0.5\\
  0.5 & 3
\end{array}
\right]
\right).
\]
In each of the four instances, all the data points were divided into two classes by a curve $y=f(x)$ manually.
For the first instance, the function $f(x)$ was defined as the perpendicular bisector between the centers of two distributions $\mathbf{P}_1$ and $\mathbf{P}_2$.
In the other three instances, the curve was defined by a triangular function.
Concretely, the function was expressed as $ y=\sin(\frac{\pi}{4}x) $, $y=2\sin(\frac{\pi}{2}x)$ and $y=2\sin(\pi x)$ for the second, third and fourth instances, respectively.
100 data points were generated for both classes, where 30 points were selected randomly as training data, the rest points as testing data.
In addition, the 30 training points of class 1 will be mislabelled as class 2 randomly with a probability $20\%$.

All the generated data points are shown in Figure \ref{Fig_Datasets1} - Figure \ref{Fig_Datasets4}, where the red points belong to class 1 and blue ones belong to class 2, and the black dashed curve is the theoretical curve of $ y=f(x) $, which separates the two classes.
In each figure, the left one shows all the generated data points, while the right one presents the training data with some mislabelled points.

\begin{figure}[htbp]
\centering
\begin{minipage}[t]{0.49\textwidth}
 \includegraphics[width=0.9\textwidth]{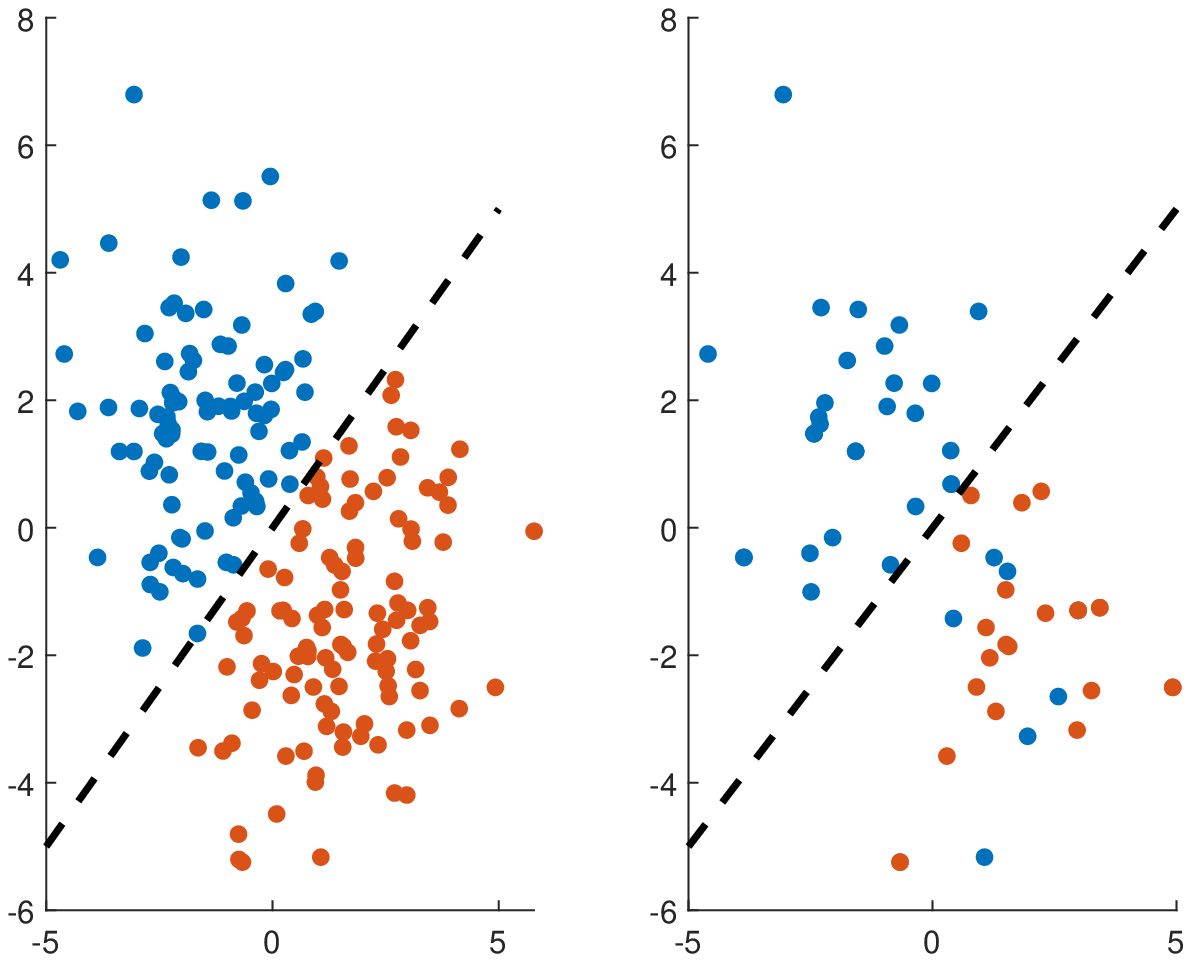}
\caption{Data set of instance 1}\label{Fig_Datasets1}
\end{minipage}
\hfill
\begin{minipage}[t]{0.49\textwidth}
 \includegraphics[width=0.9\textwidth]{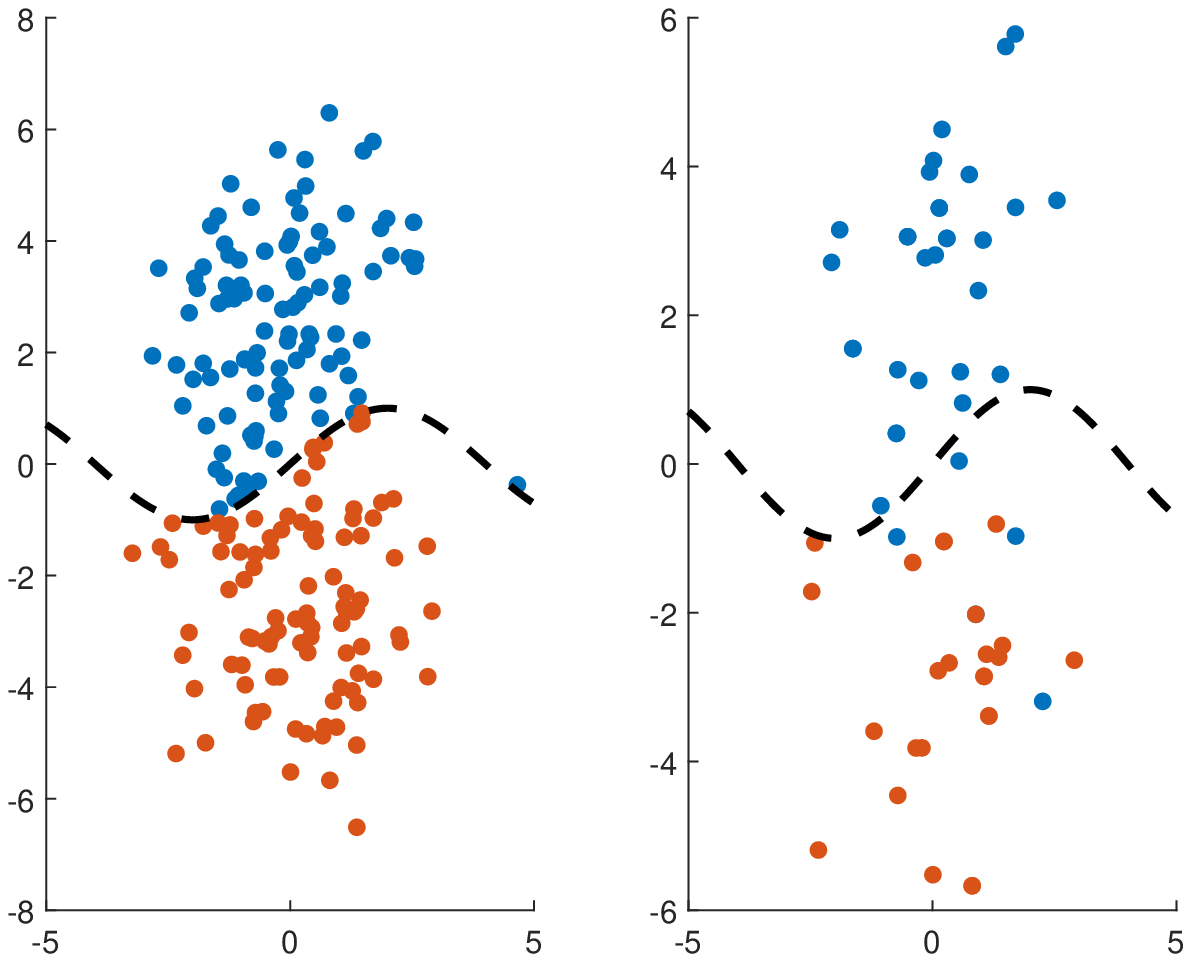}
\caption{Data set of instance 2}\label{Fig_Datasets2}
\end{minipage}
\hfill
\begin{minipage}[t]{0.49\textwidth}
 \includegraphics[width=0.9\textwidth]{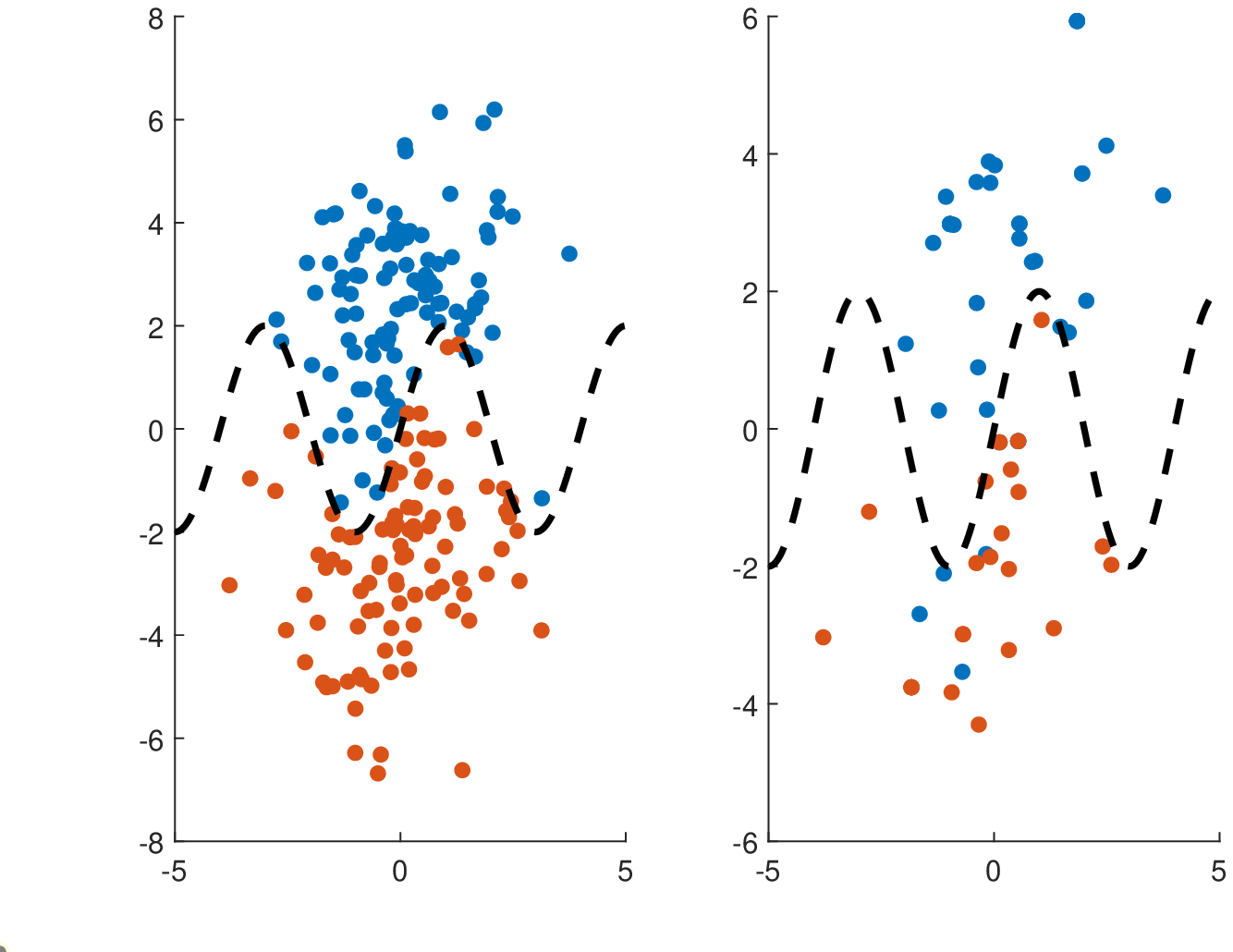}
\caption{Data set of instance 3}\label{Fig_Datasets3}
\end{minipage}
\hfill
\begin{minipage}[t]{0.49\textwidth}
 \includegraphics[width=0.9\textwidth]{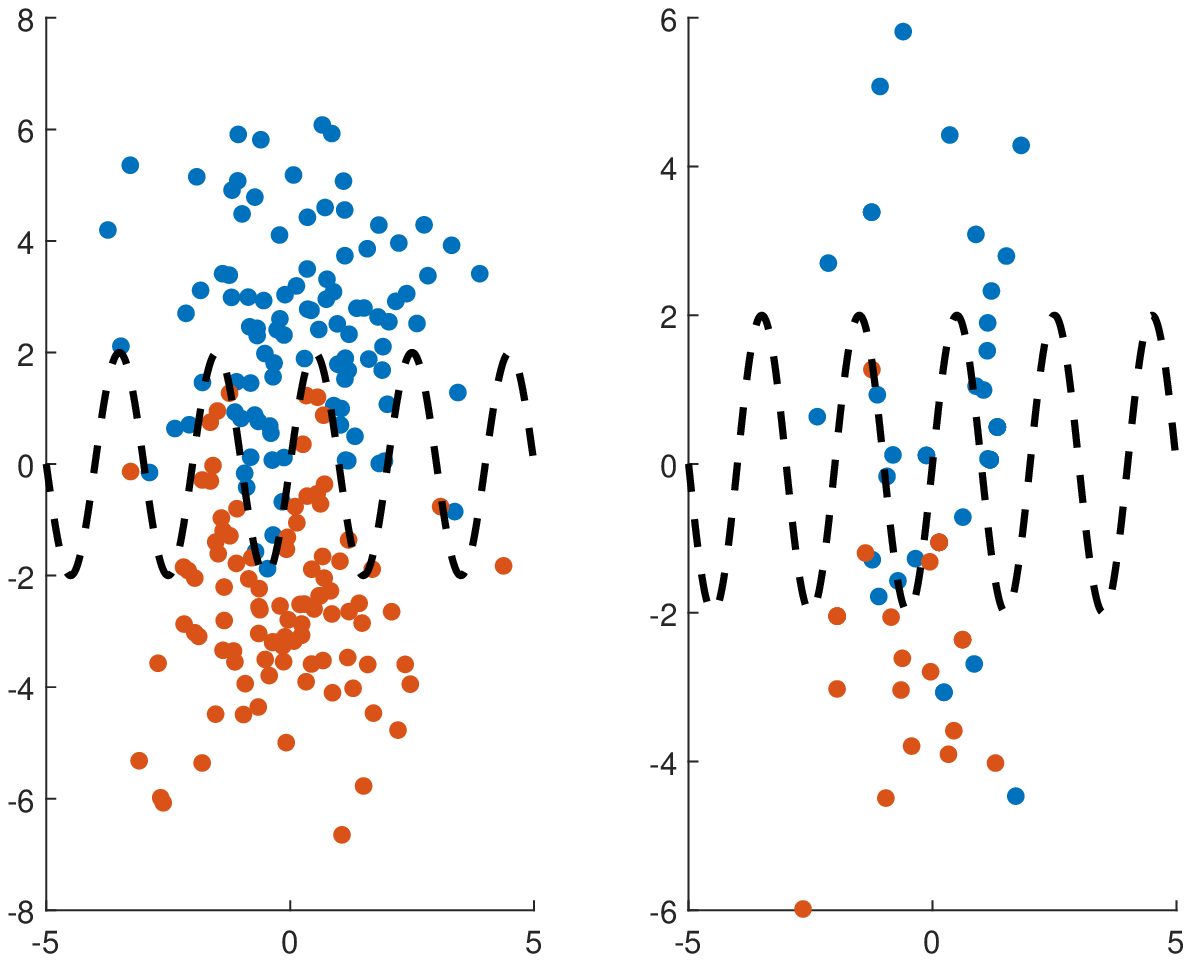}
\caption{Data set of instance 4}\label{Fig_Datasets4}
\end{minipage}
\end{figure}

The classification accuracy of both the standard support vector machine and our chance constrained model with different kernel functions are summarized in Table \ref{Table_Binary}.
The first column shows the numbers of instances, while the second the column indicates the model, where ``CS" refers to the CS-SVM with soft margin \eqref{CC_SVM_CS-Soft} by Shilton et al. \cite{Shilton2012} and ``CC" refers to the chance constrained CS-SVM \eqref{CC_SVM_Integer}.
These notations will be also used in the following tables.
The third column to seventh column present the classification accuracy with different kernel functions, i.e., linear kernel (LIN), polynomial kernel with $d=2,4,6$ and RBF kernel,
respectively.

\begin{table}[htbp]
\begin{center}
\begin{minipage}{\textwidth}
\caption{The accuracy with different kernels for binary classification}\label{Table_Binary}
%\label{Tab_first}
\begin{tabular*}{\textwidth}{@{\extracolsep{\fill}}cllllll@{\extracolsep{\fill}}}
\toprule
        \multirow{2}{*}{Instance}            &      \multirow{2}{*}{Model} & \multicolumn{5}{c}{Kernels}    \\ \cmidrule(lr){3-7}
                &  & \multicolumn{1}{c}{LIN} & \multicolumn{1}{c}{$d=2$} & \multicolumn{1}{c}{$d=4$}& \multicolumn{1}{c}{$d=6$} & \multicolumn{1}{c}{RBF} \\
\midrule
\multirow{2}{*}{1}  & CS             & 97.1\%   & 90.0\%  &  75.7\%  & 72.9\%  &  74.3\%      \\
                    & CC    & 100.0\%  & 91.4\%  &  88.6\%  & 92.9\%  &  95.7\%    \\
                    \midrule
\multirow{2}{*}{2}  & CS             & 85.7\%   & 84.3\%  & 74.3\% &  75.7\%   &  86.4\%   \\
                    & CC    & 85.0\%   & 90.0\%  & 84.3\% &  83.6\%   &  88.6\%   \\
                    \midrule
\multirow{2}{*}{3}  & CS             & 82.9\%   & 85.7\% & 71.4\%  & 64.3\% &  57.1\%   \\
                    & CC    & 82.1\%   & 87.1\% & 83.5\%  & 81.4\% &  84.3\%   \\
                    \midrule
\multirow{2}{*}{4}  & CS             & 80.0\%   & 85.7\%  &  81.4\%  & 74.3\%  &  51.4\%    \\
                    & CC    & 81.4\%   & 90.0\%  &  88.6\%  & 82.9\%  &  85.7\%    \\
\botrule
\end{tabular*}
\end{minipage}
\end{center}
\end{table}

From Table \ref{Table_Binary}, we can observe that the chance constrained CS-SVM model significantly outperforms the standard model for all the instances, especially when polynomial and RBF kernels are applied.
For all the instances, the difference between the accuracy of two models increases in general as the nonlinearity of the kernel function increase.
For example, as the degree of polynomial kernel function increases from $2$ to $6$, the accuracy of both models decreases.
However, the accuracy of CS-SVM with soft margin model decreases sharply.
The decrease of the accuracy can be larger than $20\%$.
At the same time, the decrease of accuracy for chance constrained CS-SVM won't be larger than $10\%$.
When RBF kernel function is applied, in most instances, the accuracy of CS-SVM with soft margin model is even worse.
While, the chance constrained CS-SVM with RBF kernel can generally perform very well with high accuracy.
The reason is that when the degree of polynomial kernel increases and $\gamma$ in RBF kernel is not small, the models will focus more on the individual data points.
This leads that the mislabelled points will influence more and more in the training process.
Because of the property of the chance constrained CS-SVM, the mislabelled data points will be ignored with a given probability.
This reduces the influence of mislabelled data points on the trained classifier provided by chance constrained CS-SVM.

From Table \ref{Table_Binary}, it is not hard to notice that in the first instance, which is a linear separable case, the chance constrained CS-SVM with linear kernel performs best, while the CS-SVM with soft margin also provides a high accuracy.
For the rest three instances, since they are all nonlinear separable cases, the performance of linear kernel becomes worse as the nonlinearity becomes more clear.
And in these three instances, the polynomial kernel and RBF kernel become to play an important role in the training process, which increase the accuracy compared with linear kernel.
This is consistent with the fact that the data sets can be classified easier after mapping to the feature space via the feature map $\bm{\phi}$.

\begin{figure}[htbp]
\centering
\begin{minipage}[t]{0.49\textwidth}
\includegraphics[width=1\textwidth]{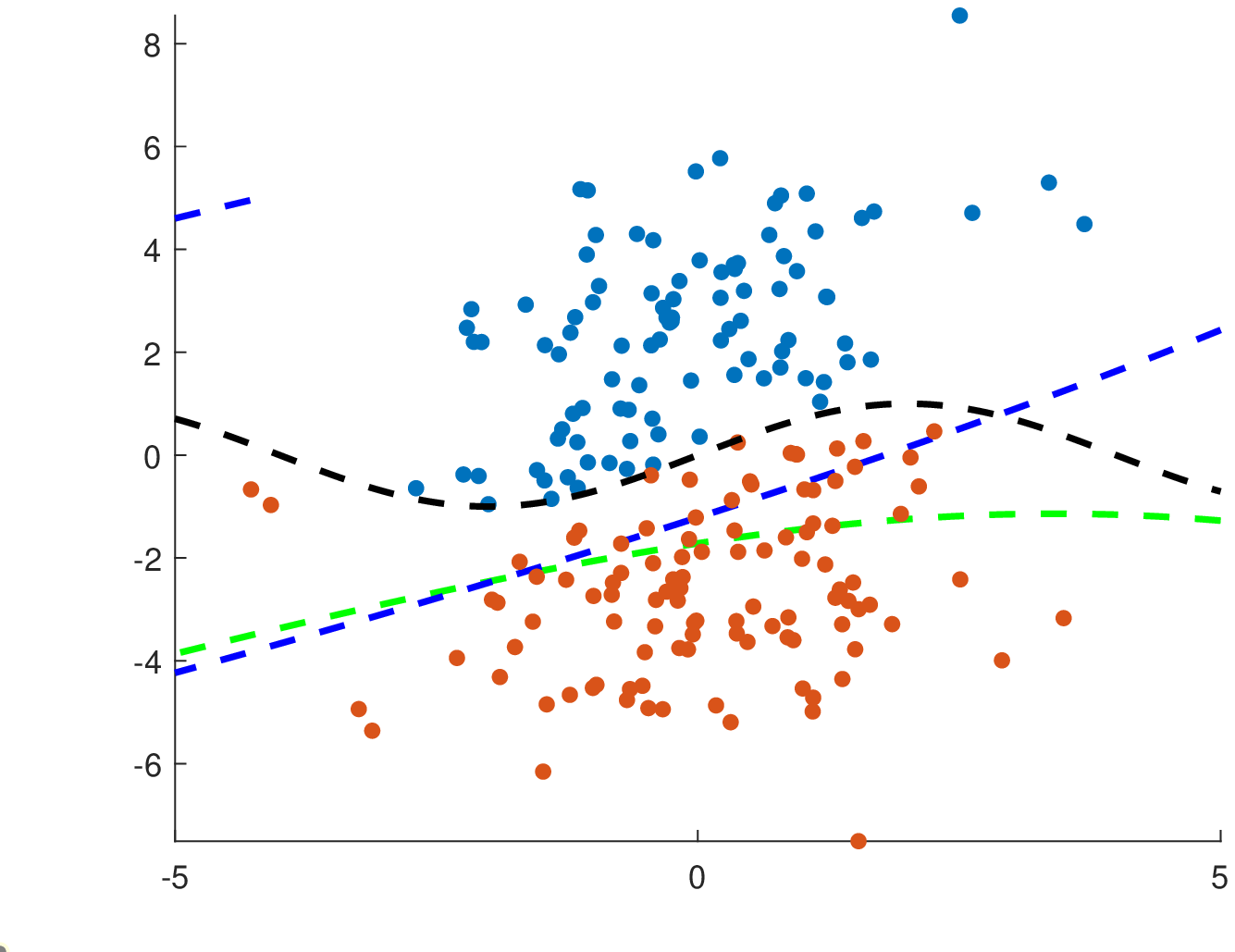}
\caption{Classification result with polynomial kernel for $d=2$}\label{Fig_KernelB2}
\end{minipage}
\hfill
\begin{minipage}[t]{0.49\textwidth}
\includegraphics[width=1\textwidth]{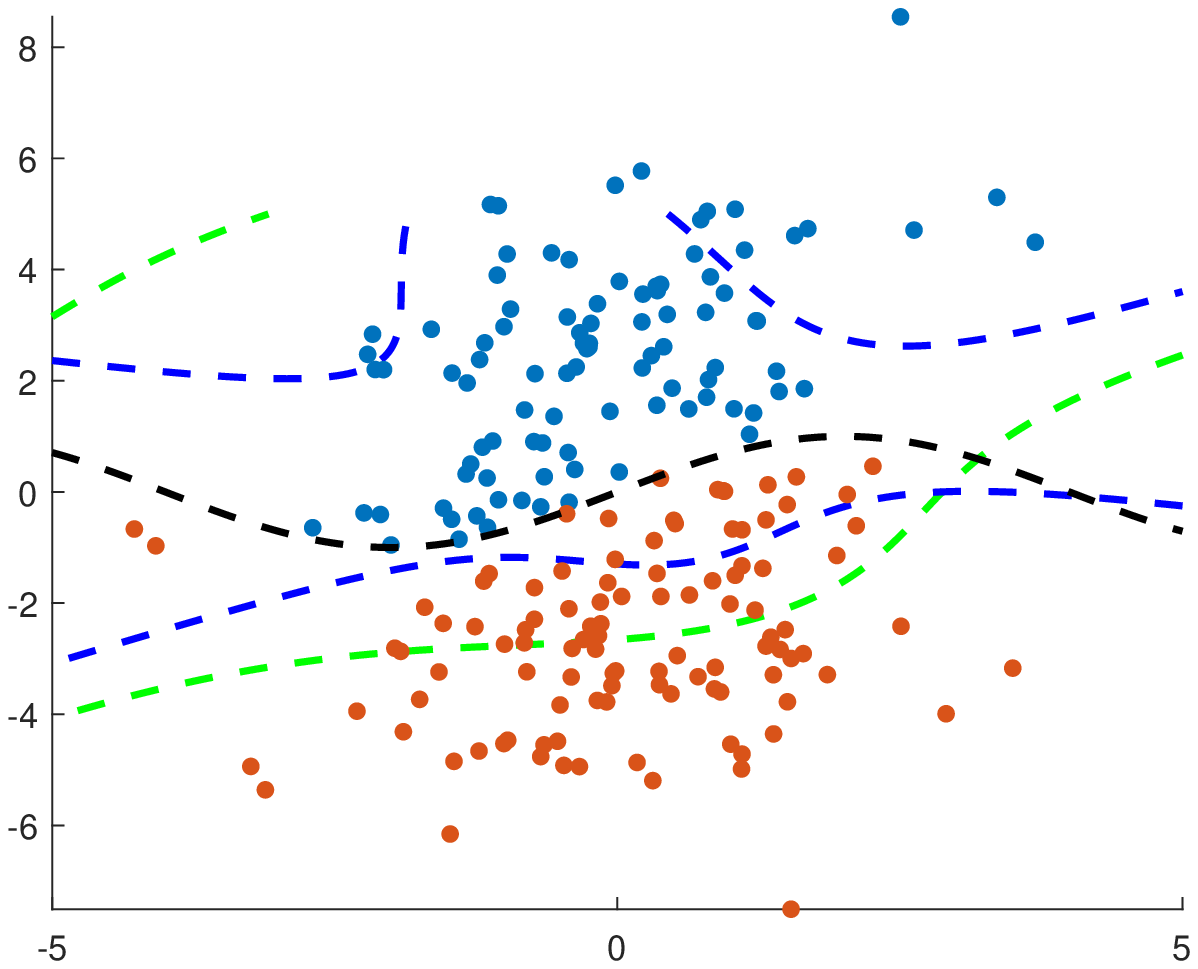}
\caption{Classification result with polynomial kernel for $d=4$}\label{Fig_KernelB3}
\end{minipage}
\end{figure}

\begin{figure}[htbp]
\centering
\begin{minipage}[t]{0.49\textwidth}
\includegraphics[width=1\textwidth]{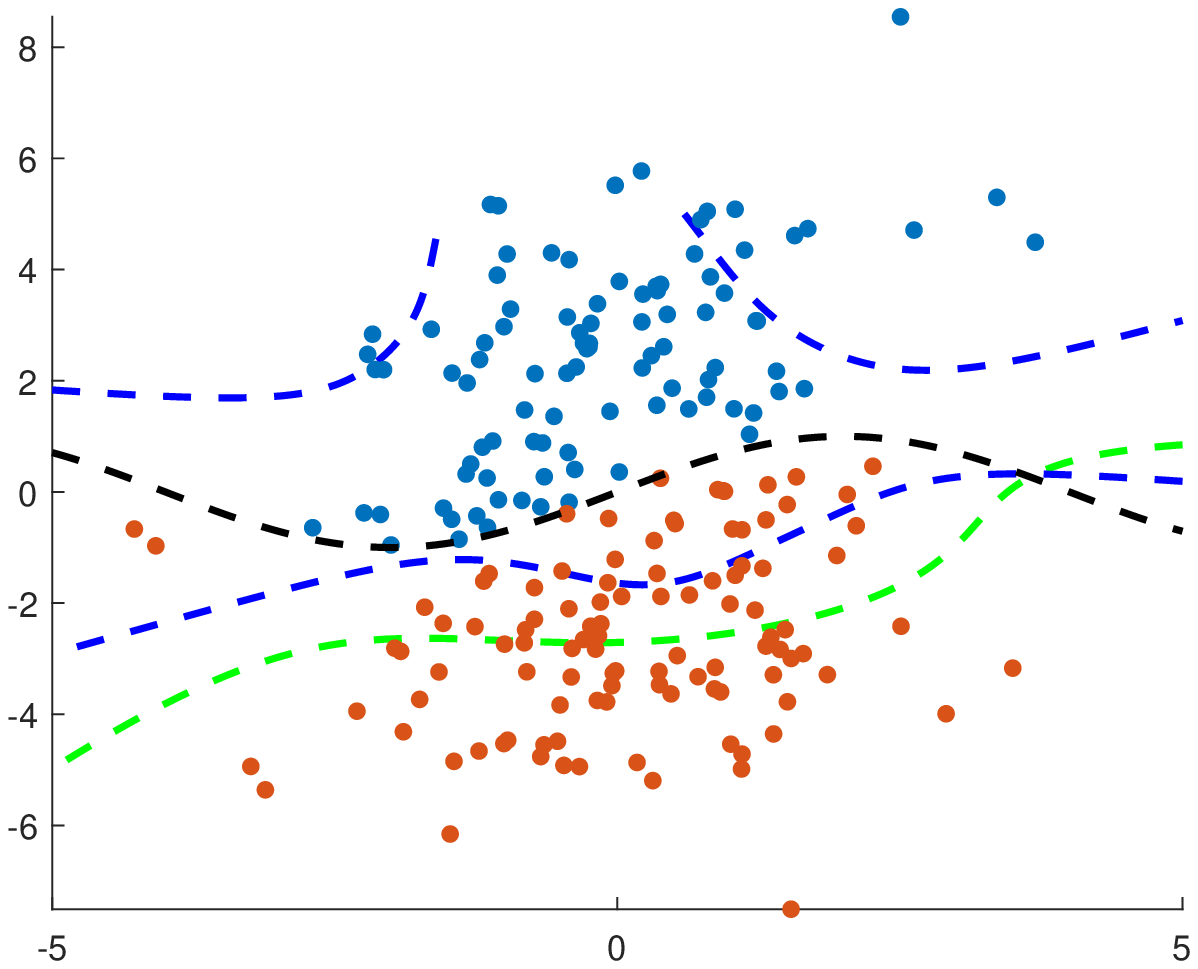}
\caption{Classification result with polynomial kernel for $d=6$}\label{Fig_KernelB4}
\end{minipage}
\hfill
\begin{minipage}[t]{0.49\textwidth}
\includegraphics[width=1\textwidth]{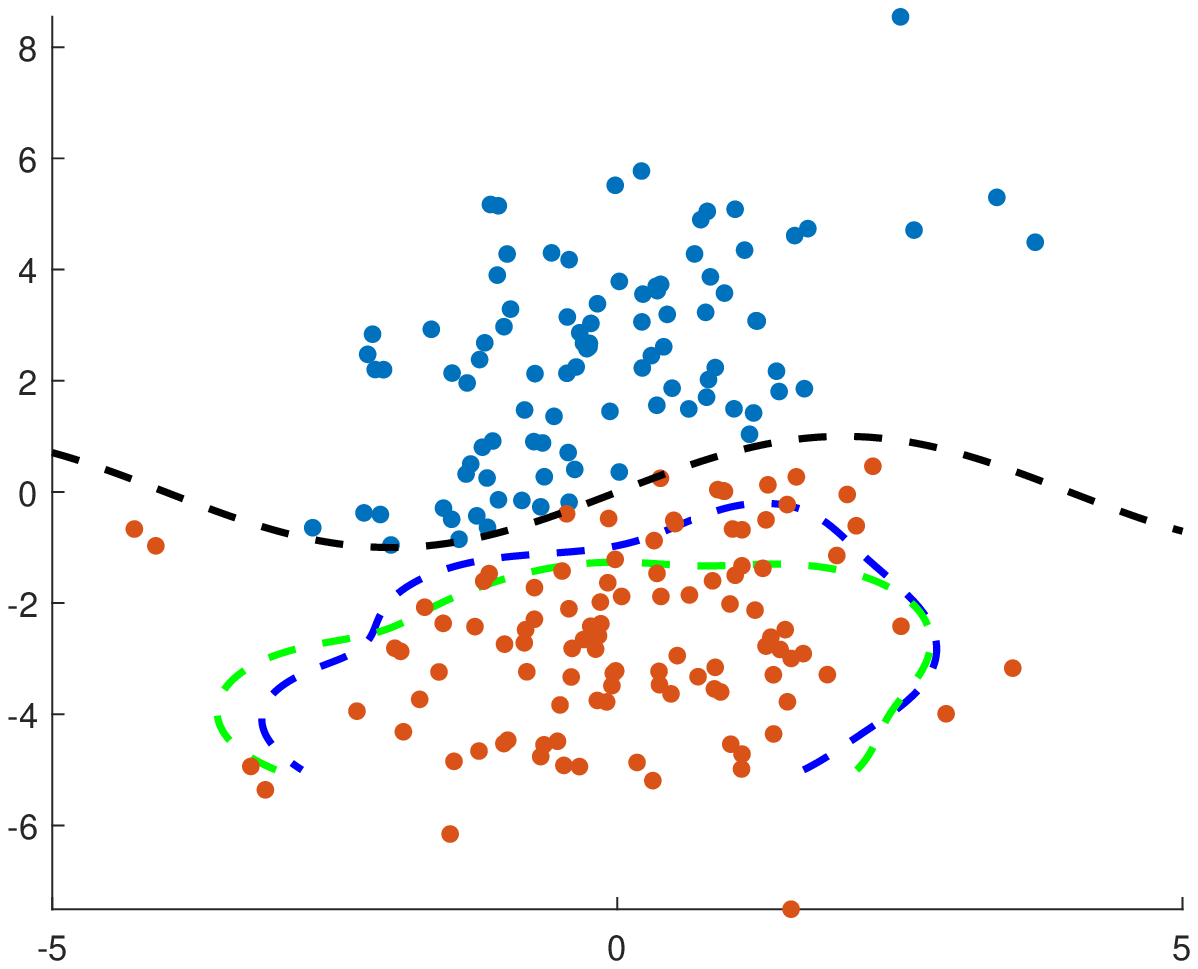}
\caption{Classification result with RBF kernel}\label{Fig_KernelB5}
\end{minipage}
\end{figure}

A representative result for the CS-SVM with soft margin and chance constrained CS-SVM is shown in Figure \ref{Fig_KernelB2} - Figure \ref{Fig_KernelB5}.
The four figures show the classification results of the second instance with different kernels.
In these four figures, the green dashed curve represents the classifier trained by the CS-SVM with soft margin, the blue dashed curve indicates the trained classifier by the chance constrained CS-SVM, and the black dashed curve is the theoretical curve which separate the two classes.
Combined with Figure \ref{Fig_Datasets3}, it is not hard to observe that the classifier is closer to the mislabelled data points, which tries to contain the mislabelled points in the other side.
This phenomenon becomes more clear as the parameter $d$ increases from $2$ to $6$.
It shows that the trained classifier was influenced a lot by these mislabelled data points.
For the trained classifier by the chance constrained CS-SVM, the influence from the mislabelled data points is not large.
With different kernels, the trained classifier by the chance constrained CS-SVM is always close to the theoretical curve, i.e., the black dashed line.
This also explained why the classification accuracy of chance constrained CS-SVM is always better.

For multiclass classification, we designed a $2$-dimensional, $3$-class data set, which was generated from the distributions:
\[
\begin{array}{l}
  \mathbf{P}_1 = \mathcal{N}\left(\left[
\begin{array}{c}
  0 \\
  0
\end{array}
\right],
\left[
\begin{array}{cc}
  1 & 0\\
  0 & 1
\end{array}
\right]
\right), \\
  \mathbf{P}_2 = \mathcal{N}\left(\left[
\begin{array}{c}
  8 \\
  -5.5
\end{array}
\right],
\left[
\begin{array}{cc}
  1.5 & 3\\
  3 & 8
\end{array}
\right]
\right), \\
\mathbf{P}_3 = \left\{\left[
\begin{array}{c}
  r\cos\theta \\
  r\sin\theta
\end{array}
\right]
:
r \in \mathcal{N}(6, 2.25), \theta \in \mathcal{U}\left(-\frac{\pi}{2}, \frac{\pi}{2}\right)
\right\},
\end{array}
\]
where $\mathcal{U}$ is the uniform distribution.
Similar with the binary classification, 100 data points were generated for each class, where 30 points were selected randomly as training data, the rest points as testing data.
And with a probability $20\%$, the 30 training points of class 3 will be mislabelled as class 1 or class 2 randomly with equal probability.

The generated data points are shown in Figure \ref{Fig_datasetMul}, where the blue points belong to class 1, red ones belong to class 2 and yellow points belong to class 3.
In the figure, the left one shows all the generated data points, while the right one presents the training data with some mislabelled points.

\begin{figure}[htbp]%
\centering
\includegraphics[width=0.9\textwidth]{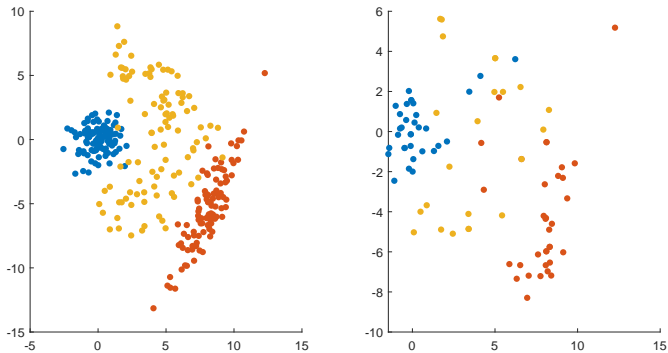}
\caption{Data set for multiclass classification}\label{Fig_datasetMul}
\end{figure}

The classification accuracy of both the CS-SVM and our chance constrained CS-SVM with different kernel functions are summarized in Table \ref{Table_Mul}.
The first column indicates the actual classification of testing data points, while the second the column indicates the model.
The third column to seventh column present the classification accuracy with different kernel functions, i.e., linear kernel (LIN), polynomial kernel with $d=2,4,6$ and RBF kernel.

\begin{table}[htbp]
\begin{center}
\begin{minipage}{\textwidth}
\caption{The accuracy with different kernels for multiclass classification}
\label{Table_Mul}
\begin{tabular*}{\textwidth}{@{\extracolsep{\fill}}cllllll@{\extracolsep{\fill}}}
\toprule
        \multirow{2}{*}{Class}            &      \multirow{2}{*}{Model} & \multicolumn{5}{c}{Kernels}    \\ \cmidrule(lr){3-7}
                &  & \multicolumn{1}{c}{LIN} & \multicolumn{1}{c}{$d=2$} & \multicolumn{1}{c}{$d=4$}& \multicolumn{1}{c}{$d=6$} & \multicolumn{1}{c}{RBF} \\
\midrule
\multirow{2}{*}{1}  & CS             & 97.1\%   & 100.0\%  & 98.6\%  & 100.0\% &  100.0\%       \\
                    & CC    & 91.4\%   & 100.0\%  & 98.6\%  & 95.7\% &   94.3\%   \\
                    \midrule
\multirow{2}{*}{2}  & CS             & 100.0\%    & 98.6\%  & 95.7\%  & 94.3\% &  90.0\%       \\
                    & CC    & 100.0\%   & 98.6\%  & 91.4\% & 90.0\% &    87.1\%       \\
                    \midrule
\multirow{2}{*}{3}  & CS             & 61.4\%    & 75.7\%  & 64.3\%  & 34.3\% &   45.7\%      \\
                    & CC    & 78.6\%   & 94.3\%  & 90.0\%  & 87.1\% &   94.3\%    \\
                    \midrule
\multirow{2}{*}{Total}  & CS         & 86.7\%    & 91.4\%  & 86.2\%  & 75.7\% &  78.6\%           \\
                    & CC    & 90.0\%   & 97.1\%  & 93.3\%  & 91.0\% &  91.9\%     \\
\botrule
\end{tabular*}
\end{minipage}
\end{center}
\end{table}

As shown in Table \ref{Table_Mul}, we can observe that the total accuracy of chance constrained CS-SVM is always larger than the one of CS-SVM with soft margin.
For both models, the performance with linear kernel is worst, which is mainly because the data set is not linearly separable, especially between class 1 and class 3.
From Figure \ref{Fig_datasetMul}, it is not hard to observe that the class 2 and classes 1,3 are linearly separable.
This is the reason why the accuracy of class 2 is $100\%$ when linear kernel is applied.
When polynomial kernel and RBF kernel are applied, the accuracy of class 3 can be improved.
In addition, the accuracy of class 2 decreases, as polynomial kernel and RBF kernel focus more on individual data points.
It is mainly due to the fact that the training data points is a small part of the whole data set.
Therefore, the kernels, especially RBF kernel, will ignore some information of the whole data set, which leads low accuracy in testing data.
We also noticed that the accuracy of class 1 and class 2 are always high, while the accuracy of class 3 is the lowest.
It is because there are some mislabelled points in the training data set of class 3.
These mislabelled points will push the classifier deep into the region of class 3,
which leads the sacrifice of accuracy of class 3 to guarantee the accuracy of class 1 and class 2.

However, the sacrifice of accuracy of class 3 is not that obvious in the performance of chance constrained CS-SVM.
For class 1 and class 2, the accuracy of both CS-SVM with soft margin and chance constrained CS-SVM is high and comparable, though the accuracy of CS-SVM with soft margin is slightly larger than the chance constrained CS-SVM.
However, for class 3, the accuracy of classification by chance constrained CS-SVM is always much larger than the accuracy of classification by CS-SVM with soft margin.
The chance constrained CS-SVM can generally provide an accuracy around $88\%$ with different polynomial kernels.
When RBF kernel is applied, the accuracy of class 3 by chance constrained CS-SVM can reach $94.0\%$.
This is consistent with the case of binary classification that the chance constrained CS-SVM can reduce the influence of the mislabelled data points on the trained classifier efficiently.

The classification regions for polynomial kernel with $d=2$ is shown in Figure \ref{Fig_KernelM1} and Figure \ref{Fig_KernelM2}, where the Figure \ref{Fig_KernelM1} shows the classification result of chance constrained CS-SVM, and the Figure \ref{Fig_KernelM2} shows the result of CS-SVM with soft margin.
In both Figure \ref{Fig_KernelM1} and Figure \ref{Fig_KernelM2}, the blue circles refer to the points in class 1, the red triangles refer to the data in class 2, and the yellow diamonds represent the data points in class 3.
The blue region means the region classified as class 1, the red region indicates the region classified as class 2 and the yellow region represents the region of class 3.

\begin{figure}[htbp]
\centering
\begin{minipage}[t]{0.49\textwidth}
\includegraphics[width=1\textwidth]{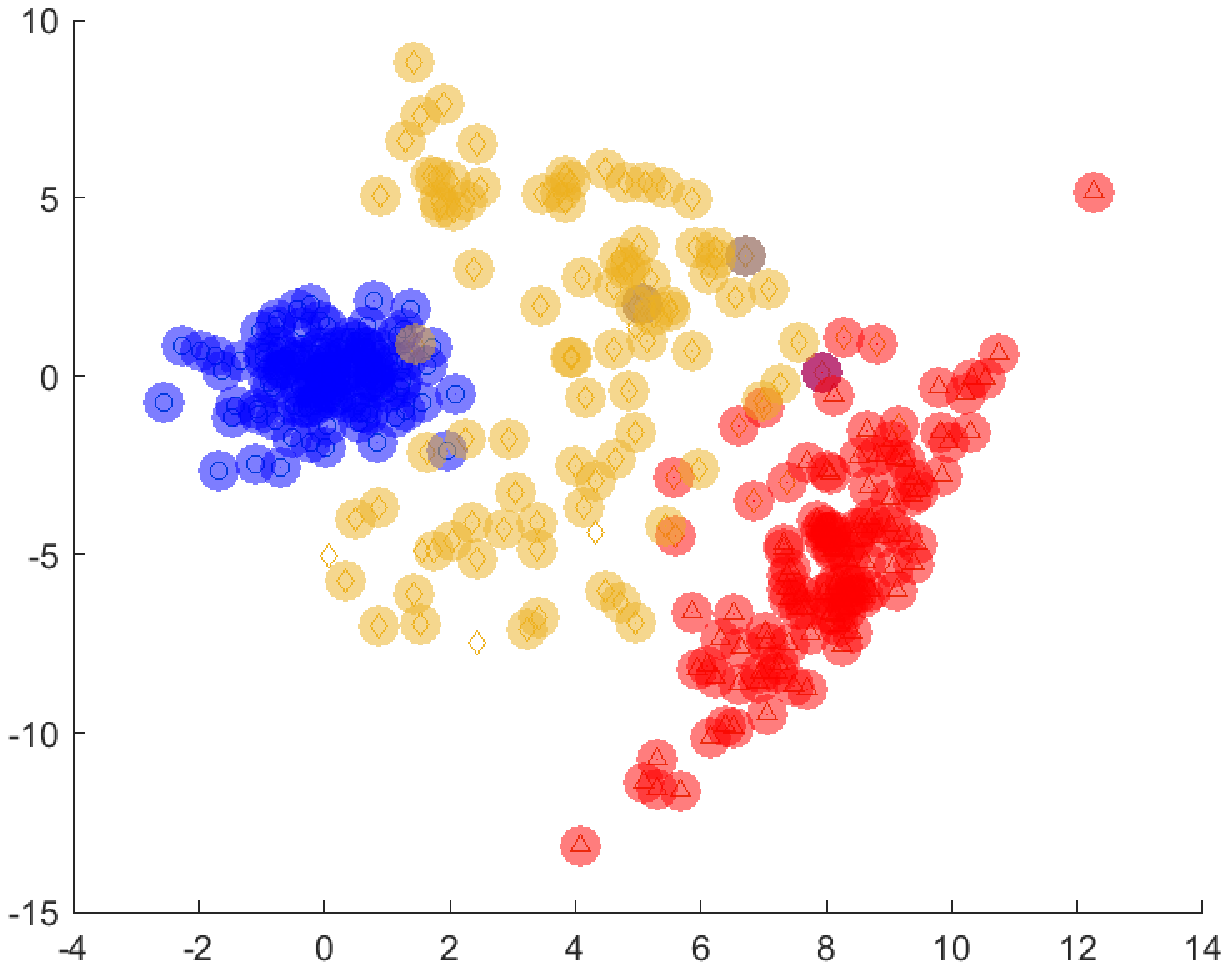}
\caption{Multiclass classification of chance constrained CS-SVM with polynomial kernel for $d=2$}\label{Fig_KernelM1}
\end{minipage}
\hfill
\begin{minipage}[t]{0.49\textwidth}
\includegraphics[width=1\textwidth]{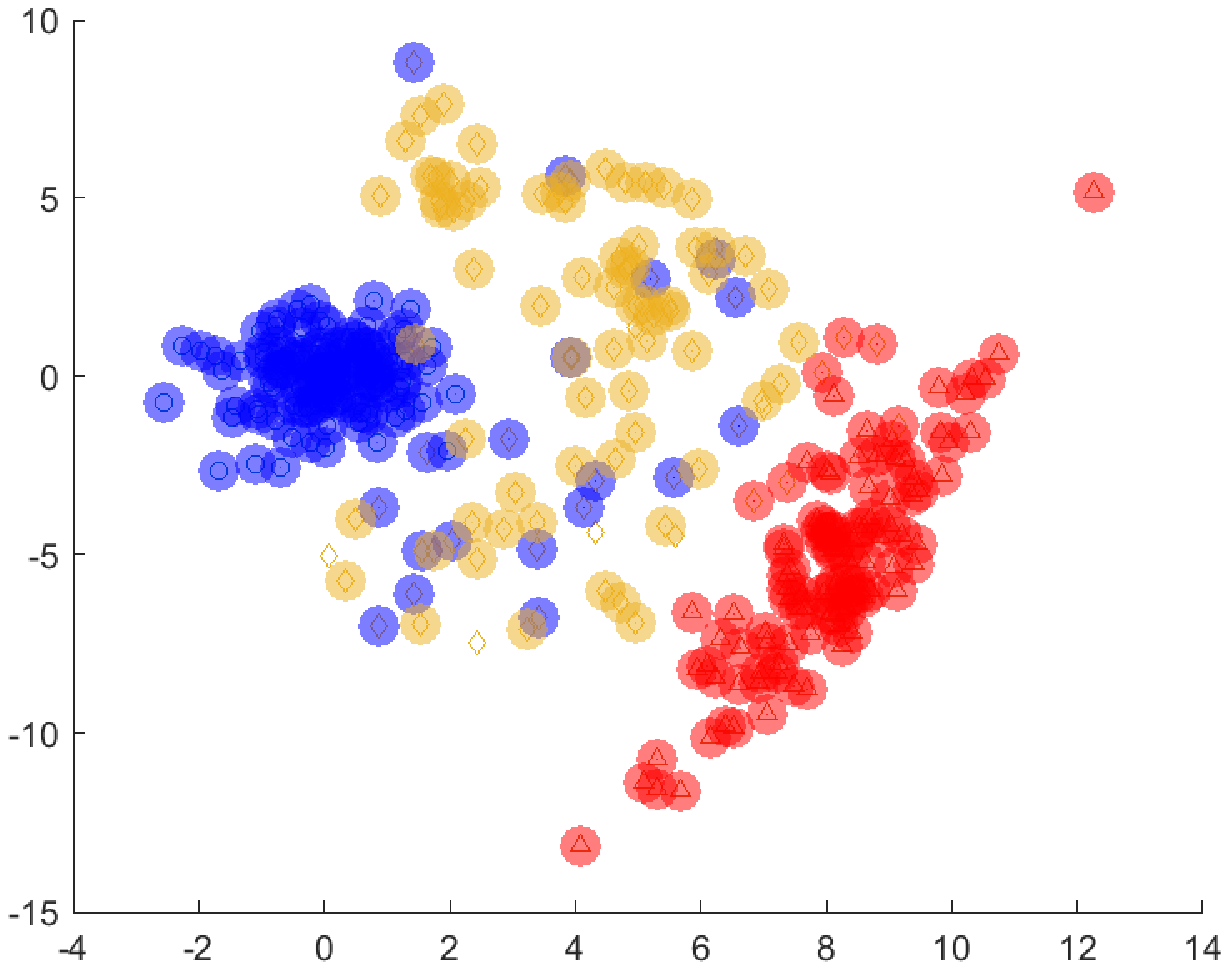}
\caption{Multiclass classification of CS-SVM with soft margin with polynomial kernel for $d=2$}\label{Fig_KernelM2}
\end{minipage}
\end{figure}

Compared these two figures, it is not hard to notice that many parts in the region of class 3 is recognized as part of region of class 1 or class 2 by the CS-SVM with soft margin.
This is mainly due the influence of the mislabelled data points in the training data of class 3.
These mislabelled data points lead the trained classifier to recognized the region of class 3 as a part of class 1 or class 2.
Figure \ref{Fig_KernelM1} shows that the trained classifier by the chance constrained CS-SVM is more robust and is less influenced by mislabelled data points.

\subsection{UCI data sets}

In this experiment, the performance of chance constrained CS-SVM was demonstrated on the popular UCI data set \cite{UCIdataset}.
%{\color{blue}
Furthermore, we compare the performance of our models with
the robust classification model (RC) for classification with feature and label uncertainty proposed in \cite{Robust_classification}, by combining the kernel technique.
To apply the method in \cite{Robust_classification},
the multiclass data sets were converted into binary data sets using the one-versus-all scheme \cite{Multi_to_binary}.
%}
%{\color{blue}
Three data sets have been selected from the repository, namely the Banknote Authentication Data Set (BKA, \cite{Banknote}) for binary classification, Iris Data Set (IRIS, \cite{Iris}) and Seeds Data Set (SEED, \cite{Seed}) for multiclass classification.
%}
BKA data set were extracted from images that were taken for the evaluation of an authentication procedure for banknotes.
IRIS data set contains 3 classes of iris plant:
Each class refers to a type of iris plant, where one class is linearly separable from the other 2 and the latter are not linearly separable from each other.
%{\color{blue}
SEED data set involves the prediction of species given measurements of seeds from three varieties of wheat.
%}
The BKA data set contains 1372 data points, where there are 762 data points for class 1 and 610 data points for class 2.
And 50 data points were selected randomly for each class as the training data, the rest as testing data.
IRIS data set contains 150 data points, where there are 50 data points for each class.
Among them, 20 data points were selected randomly for each class as the training data, the rest points as testing data.
%{\color{blue}
SEED data set contains 210 data points.
And there are 70 data points for each class.
For each class, 30 data points were selected randomly as the training data, the rest as testing data.
%}
Linear, polynomial and RBF kernels were all tested in both of these data sets.
This process was replicated 10 times as the random selection of training and evaluation sets can impact performance.

The classification result on BKA data set are summarized in Table \ref{Table_BKA}.
Table \ref{Table_BKA} presents the mean accuracy(standard derivation) of classification for BKA data set with different kernels from the second column to the sixth column, and the three model names in the first column.
We can observe that the chance constrained CS-SVM can always perform better than the CS-SVM with soft margin and
the robust classification model RC, especially when polynomial kernel with $d \geq 4$ and RBF kernel are applied.
Although the mean accuracy of the classification by chance constrained CS-SVM is not be better than the mean accuracy of CS-SVM with soft margin,
the standard derivation of accuracy by chance constrained CS-SVM is always smaller, compared with CS-SVM with soft margin.
Considering the total number of data set, the size of training data is relatively small.
As shown in Table \ref{Table_BKA}, the stable accuracy of trained classifier by chance constrained CS-SVM implies the potential ability of chance constrained CS-SVM to handle the classification with small samples.
%{\color{blue}
Compared with the robust classification model RC, it is not hard to notice that the mean accuracy of the chance constrained CS-SVM is just slightly better than the mean accuracy of the model RC.
At the same time, the standard derivation of model RC is always smaller.
This is because the robust model always considers the worst case, which may lead the trained classifier to be too conservative that accuracy can be sacrificed.
%}

\begin{table}[htbp]
\begin{center}
\begin{minipage}{\textwidth}
\caption{Classification results for BKA}\label{Table_BKA}
%\label{Tab_first}
\begin{tabular*}{\textwidth}{@{\extracolsep{\fill}}lllllll@{\extracolsep{\fill}}}
\toprule
           \multirow{2}{*}{Model} & \multicolumn{5}{c}{Kernels}    \\ \cmidrule(lr){2-6}
                  & \multicolumn{1}{c}{LIN} & \multicolumn{1}{c}{$d=2$} & \multicolumn{1}{c}{$d=4$}& \multicolumn{1}{c}{$d=6$} & \multicolumn{1}{c}{RBF} \\
\midrule
 CS             & $98.2\%(3.9\%)$  &$\bm{96.3\%}(5.4\%)$  & $88.7\%(8.1\%)$  & $78.4\%(10.6\%)$  &  $94.9\%(5.9\%)$       \\
 RC                & $98.2\%(3.0\%)$  & $95.9\%(3.8\%)$  & $93.1\%(5.2\%)$  & $88.3\%(6.9\%)$  &  $95.4\%(3.3\%)$     \\
 CC                    & $\bm{98.3\%}(3.4\%)$  & $96.2\%(4.3\%)$  &$\bm{93.8\%}(6.3\%)$  &$\bm{89.6\%}(8.4\%)$  &  $\bm{95.7\%}(3.6\%)$     \\
\botrule
\end{tabular*}
\end{minipage}
\end{center}
\end{table}

Table \ref{Table_IRIS} presents the classification results for IRIS data set.
The mean accuracy(standard derivation) of classification for IRIS data set with different kernels are listed from the second column to the last column, respectively.
The mean accuracy and the associated standard derivation when polynomial kernel has $d=6$ are absent because of the sparsity of kernel matrix, which does not allow to properly solve the optimization problem.
Regardless the performance with polynomial kernel with $d=6$, it is not hard to notice that, in contrast to the performance on BKA data set for binary classification,
%{\color{blue}
the CS-SVM with soft margin always performs far behind to both chance constrained CS-SVM and model RC on IRIS data set.
Compared with the CS-SVM with soft margin,
not only the standard derivation of the accuracy by chance constrained CS-SVM is always smaller, but also the mean accuracy of the classification by chance constrained CS-SVM is significantly larger.
The best mean accuracy was obtained when RBF kernel and polynomial kernel with $d=2$ were applied in the chance constrained CS-SVM.
Similar with the performance on BKA data set, the mean accuracy of model RC is slightly smaller than the mean accuracy of chance constrained CS-SVM by about $1\% \sim 2\%$, while the standard derivation of model RC is always smaller than the standard derivation of chance constrained CS-SVM by about $ 0.2\% \sim 0.4\%$, due to the sacrifice in accuracy as previously explained.

%}

\begin{table}[htbp]
\begin{center}
\begin{minipage}{\textwidth}
\caption{Classification results for IRIS}\label{Table_IRIS}
\begin{tabular*}{\textwidth}{@{\extracolsep{\fill}}lllllll@{\extracolsep{\fill}}}
\toprule
           \multirow{2}{*}{Model} & \multicolumn{5}{c}{Kernels}    \\ \cmidrule(lr){2-6}
                  & \multicolumn{1}{c}{LIN} & \multicolumn{1}{c}{$d=2$} & \multicolumn{1}{c}{$d=4$}& \multicolumn{1}{c}{$d=6$} & \multicolumn{1}{c}{RBF} \\
\midrule
 CS         & $85.2\%(2.9\%)$  & $88.4\%(3.7\%)$  & $84.9\%(5.6\%)$  & -  &  $92.3\%(2.9\%)$    \\
 RC                & $93.4\%(1.3\%)$  & $95.1\%(2.1\%)$  & $94.6\%(2.8\%)$  & -  &  $96.9\%(1.4\%)$     \\
 CC                & $\bm{96.4\%}(1.5\%)$  & $\bm{97.2\%}(2.4\%)$  & $\bm{96.7\%}(3.2\%)$  & -  &  $\bm{97.8\%}(1.6\%)$    \\
\botrule
\end{tabular*}
\end{minipage}
\end{center}
\end{table}

%{\color{blue}
Table \ref{Table_SEED} presents the classification results for SEED data set.
The mean accuracy(standard derivation) of classification for SEED data set with different kernels are listed from the second column to the last column, respectively.
From Table \ref{Table_SEED}, we can observe that on the SEED data set, the difference between performances of CS-SVM with soft margin, model RC and chance constrained CS-SVM are not as significant as the performances on IRIS data set.
In spite of this, the chance constrained CS-SVM provided the highest mean accuracy $98.6\%$, when polynomial kernel with $d=6$ were applied.
The standard derivation of chance constrained model is slightly larger than the standard derivation of model RC by about $ 0.1\% \sim 0.5\% $, but much smaller than the standard derivation of CS-SVM with soft margin.

\begin{table}[htbp]
\begin{center}
\begin{minipage}{\textwidth}
\caption{Classification results for SEED}\label{Table_SEED}
%\label{Tab_first}
\begin{tabular*}{\textwidth}{@{\extracolsep{\fill}}lllllll@{\extracolsep{\fill}}}
\toprule
           \multirow{2}{*}{Model} & \multicolumn{5}{c}{Kernels}    \\ \cmidrule(lr){2-6}
                  & \multicolumn{1}{c}{LIN} & \multicolumn{1}{c}{$d=2$} & \multicolumn{1}{c}{$d=4$}& \multicolumn{1}{c}{$d=6$} & \multicolumn{1}{c}{RBF} \\
\midrule
 CS     & $91.9\%(2.3\%)$  & $\bm{95.2\%}(4.2\%)$  & $96.1\%(4.8\%)$  & $96.8\%(5.4\%)$  &  $93.7\%(4.4\%)$ \\
 RC           & $92.0\%(1.1\%)$  & $94.9\%(1.3\%)$  & $96.6\%(1.8\%)$  & $97.5\%(2.1\%)$  &  $95.1\%(2.5\%)$ \\
 CC           & $\bm{92.4\%}(1.2\%)$  & $94.8\%(1.7\%)$  & $\bm{97.1\%}(2.1\%)$  & $\bm{98.6\%}(2.4\%)$  &  $\bm{95.7\%}(3.0\%)$ \\
\botrule
\end{tabular*}
\end{minipage}
\end{center}
\end{table}

From the above experimental results on different data sets, it is not hard to conclude that on these data sets, the chance constrained conic-segmentation support vector machine \eqref{CC_SVM_Integer} always performs best, which can provide best mean accuracy and small standard derivation.
The robust classification model proposed in \cite{Robust_classification} can also provide good accuracy with smallest standard derivation.
The conic-segmentation support vector machine with soft margin \eqref{CC_SVM_CS-Soft} performs worst with worst accuracy and large standard derivation in general.
The experiment shows that the chance constrained conic-segmentation support vector machine can achieve effectiveness and robustness in both binary classification and multiclassification problems.
%}

\section{Conclusion}

In this paper, a chance constrained conic-segmentation support vector machine model has been proposed, which can be seen as an extension of the conic-segmentation support vector machine with uncertain or mislabelled data.
This model can ensure a small probability of misclassification for the uncertain data.
Based on a data set, the chance constrained CS-SVM can be trained by solving a mixed integer programming problem.
To handle the nonlinear classification, a corresponding kernelization model has also been derived.
In addition, geometric illustration has been presented to show how the chance constrained CS-SVM works.
The chance constrained CS-SVM has also been experimentally compared to CS-SVM with soft margin on both artificial data and real data in both binary classification and multiclass classification.
The experimental results demonstrate that chance constrained CS-SVM is both effective and robust.

For future research, the numerical algorithms on big data is a potential direction and application.
Currently,
as the chance constrained CS-SVM model is obtained by solving a mixed integer programming problem, this could lead to large solving times, especially on a big data set.
In the future, through proposed approaches, more meaningful results could be obtained on some big data sets.

\appendix
\section{Appendix}
\subsection{Proof of Theorem \ref{TH_Kernel}}\label{AppendixA}
\begin{proof}
To get the optimal classifier in feature space, the problem \eqref{CC_SVM_Integer_K} should be solved.
From Lemma \ref{lem_span}, problem \eqref{CC_SVM_Integer_K} can be reformulated as
\begin{equation}
\label{CC_SVM_Integer_KRe}
\begin{aligned}
\min\limits_{\omega_j,\bm{b}}& ~\frac{1}{2}\sum_{j=1}^{d_T}\|\omega_j\|^2 \\
\mathrm{s.t.}~&~ \sum_{j=1}^{d_T}(\bm{v}^n_{s,t})_j\left(( \omega_j )^\top\bm{\phi}(\bm{x}_i) + \bm{b}_j - (\bm{u}_{n,s})_j \right) + M_s z^s_i \geq 0,\\
& ~~~~~~~~~~~~~~~~~~ t\ne s, t \in \mathbb{Z}_n, \bm{x}_i \in \Theta_s, i \in \mathbb{Z}_{N_s}, s \in \mathbb{Z}_n,\\
& ~\sum_{ i \in \mathbb{Z}_{N_s}} z^s_i \leq \alpha_sN_s, s \in \mathbb{Z}_n,\\
&~ z^s_i \in \{0,1\}, i \in \mathbb{Z}_{N_s}, s \in \mathbb{Z}_n,
\end{aligned}
\end{equation}
Without loss of generality, for $j=1,\cdots, d_T$, any optimal $\omega_j$ can be written as
\begin{equation}\label{Combine}
  \omega_j = \sum_{s \in \mathbb{Z}_n}\sum_{\bm{x}_i \in \Theta_s}\gamma_{si}^j\bm{\phi}(\bm{x}_i).
\end{equation}
This equation could be derived in a more formal way by using the representer theorem \cite{RepresenterTh}.

By substituting expression \eqref{Combine} for $\omega_j, j=1,\cdots,d_T$, in problem \eqref{CC_SVM_Integer_KRe}, we have
\[
\begin{array}{l}
  \|\omega_j\|^2 = (\gamma^j)^\top K \gamma^j, \\
  ( \omega_j )^\top\bm{\phi}(\bm{x}_i) = \sum_{s \in \mathbb{Z}_n}\sum_{\bm{x}_l \in \Theta_s}\gamma_{sl}^jK_{il} = K_i^\top\gamma^j,
\end{array}
\]
where $K$ is a matrix with elements $K_{ij} = \bm{\phi}(\bm{x}_i)^\top\bm{\phi}(\bm{x}_j), \bm{x}_i \in \Theta_s, \bm{x}_j \in \Theta_{s'}, s,s' \in \mathbb{Z}_n $,
and $ K_i $ is a vector with elements $ K_{ij} = \bm{\phi}(\bm{x}_i)^\top\bm{\phi}(\bm{x}_j), \bm{x}_j \in \Theta_{s}, s \in \mathbb{Z}_n $.
Then, problem \eqref{CC_SVM_Integer_KRe} can be reformulated as
\begin{equation*}
%\label{CC_SVM_Kernel}
\begin{aligned}
\min\limits_{\gamma_j,\bm{b}}& ~\frac{1}{2}\sum_{j=1}^{d_T}(\gamma^j)^\top K \gamma^j \\
\mathrm{s.t.}~&~ \sum_{j=1}^{d_T}(\bm{v}^n_{s,t})_j\left(K_i^\top\gamma^j + \bm{b}_j - (\bm{u}_{n,s})_j \right) + M_s z^s_i \geq 0,\\
& ~~~~~~~~~~~~~~~~~~ t\ne s, t \in \mathbb{Z}_n, \bm{x}_i \in \Theta_s, i \in \mathbb{Z}_{N_s}, s \in \mathbb{Z}_n,\\
& ~\sum_{ i \in \mathbb{Z}_{N_s}} z^s_i \leq \alpha_sN_s, s \in \mathbb{Z}_n,\\
&~ z^s_i \in \{0,1\}, i \in \mathbb{Z}_{N_s}, s \in \mathbb{Z}_n.
\end{aligned}
\end{equation*}
From Lemma \ref{lem_span}, it is not hard to observe that the classifier $ \bm{g}(\bm{\phi}(\bm{x})) $ can be written in the form of \eqref{machine_K},
which concludes the theorem.
\end{proof}

\section*{Declarations}
%The authors have no relevant financial or non-financial interests to disclose.
{\bf Conflict of Interest}: The authors declare that they have no conflict of interest.

\noindent
{\bf Data availability statements}: The generated data of this study are available on request from the corresponding author SP. The UCI datasets analysed during the current study are available in the UCI Machine Learning Repository [http://archive.ics.uci.edu/ml] \cite{UCIdataset}.

%\begin{itemize}
%\item Conflict of interest: The authors declare that they have no conflict of interest.
%\item Data availability statements: The generated data of this study are available on request from the corresponding author SP.
%    The UCI datasets analysed during the current study are available in the UCI Machine Learning Repository [http://archive.ics.uci.edu/ml] \cite{UCIdataset}.
%\end{itemize}

\bibliography{sn-bibliography}% common bib file
%% if required, the content of .bbl file can be included here once bbl is generated
%%\input sn-article.bbl

%% Default %%
%%\input sn-sample-bib.tex%

\end{document}